\documentclass[11pt]{article}

\usepackage{amsmath,amsthm,amssymb,amscd,amstext,amsfonts}
\usepackage{color,verbatim,graphicx,fullpage,url}
\usepackage{xcolor}
\usepackage{algorithm}
\usepackage{algorithmic}
\usepackage{nicefrac}
\usepackage{fullpage,tikz,enumitem}
\usepackage{tikz-cd}
\usepackage{bm}
\usepackage{physics}
\usepackage{todonotes}
\usepackage{mathtools}
\usepackage{thm-restate}
\usepackage[numbers]{natbib}

\usepackage[breaklinks=true, linktocpage=true,]{hyperref}
\hypersetup{
    colorlinks = true,
    linkcolor={purple},
    urlcolor={purple},
    citecolor={blue!80!black}
}

\usepackage[nameinlink, noabbrev, capitalize]{cleveref}

\usepackage[utf8]{inputenc}
 
\Crefname{enumi}{Property}{Properties}

\theoremstyle{plain}

\newtheorem{thm}{Theorem}[section]
\newtheorem{theorem}[thm]{Theorem}

\crefname{atheorem}{Theorem}{Theorems}
\Crefname{atheorem}{Theorem}{Theorems}

\newtheorem{lemma}[thm]{Lemma}

\newtheorem{definition}[thm]{Definition}

\newtheorem{question}[thm]{Question}
\newtheorem{claim}[thm]{Claim}
\newtheorem*{claim*}{Claim}

\newtheorem{remark}[thm]{Remark}

\RenewCommandCopy{\theHtheorem}{\thetheorem}

\renewcommand\norm[1]{\left|\!\left|#1\right|\!\right|}

\newcommand{\set}[1]{\{#1\}}
\newcommand{\Set}[1]{\left\{#1\right\}}

\newcommand{\inp}[2]{{\left\langle #1,#2 \right\rangle}}            

\newcommand{\R}{\mathbb{R}}
\newcommand{\bbS}{\mathbb{S}}

\newcommand{\cX}{\mathcal X}

\newcommand{\eps}{\varepsilon}

\usepackage{ifthen}
\newcommand{\agnorm}[2][]{
	\ifthenelse{\equal{#2}{}}{
		\widetilde{\gamma}_2^{#1}
	}{
		\widetilde{\gamma}_2^{#1}(#2)
	}
}

\DeclareFontFamily{U}{mathx}{}
\DeclareFontShape{U}{mathx}{m}{n}{<-> mathx10}{}
\DeclareSymbolFont{mathx}{U}{mathx}{m}{n}
\DeclareMathAccent{\widecheck}{0}{mathx}{"71}

\newcommand{\cC}{\mathcal{C}}
\newcommand{\cH}{\mathcal{H}}
\newcommand{\cA}{\mathcal{A}}

\newcommand{\cE}{\mathcal{E}}

\newcommand{\cP}{\mathcal{P}}
\newcommand{\cU}{\mathcal{U}}

\DeclareMathOperator{\VCdim}{\textsc{vc}}

\DeclareMathOperator{\LR}{\textsc{lr}}

\DeclareMathOperator{\sign}{sign}

\renewcommand{\cH}{\mathcal{H}}

\renewcommand{\Pr}{\mathbb{P}}

\DeclareMathOperator{\conv}{conv}                                             
\DeclareMathOperator{\lspan}{span}                                             
\DeclareMathOperator{\supp}{supp}                                             
\DeclareMathOperator{\loss}{loss}                                             

\let\emptyset\varnothing

\begin{document}

\title{Tight list replicability bounds via a novel sphere covering theorem}

\author{
    Ari Blondal\thanks{McGill University, \texttt{ari.blondal@mail.mcgill.ca, hamed.hatami@mcgill.ca}. Hamed Hatami is supported by an NSERC grant.} \and Hamed Hatami\footnotemark[1] \and
    Pooya Hatami~\thanks{Ohio State University, \texttt{\{hatami.2, lalov.1, tretiak.2\}@osu.edu}} \and  Chavdar Lalov\footnotemark[2] \and Sivan Tretiak\footnotemark[2]
}

\maketitle

\begin{abstract} 
In recent years, list replicability has emerged as a framework for formalizing reproducibility in learning theory. A central question is how the required list size relates to the accuracy parameter and natural complexity measures of the hypothesis class. 

To achieve sharp bounds on list replicability, we prove a novel topological sphere covering theorem, derived from the Borsuk-Ulam theorem. 
Specifically, if the $d$-sphere is covered by open sets, each of which lies in an open hemisphere, then $d+1$ of these sets must have a common intersection.
Using this result, we obtain a sharp bound on the relationship between list size and accuracy for VC classes.
We also show that for large-margin half-spaces, provided the margin is not too large, the optimal list size equals the ambient dimension.
However, when the margin is taken to be very large, we devise a replicable algorithm achieving the minimal list size of $\lceil d/2 \rceil + 1$.
\end{abstract}

\section{Introduction}
\label{sec:intro}
Randomized learning algorithms can produce different hypotheses across multiple executions, even when trained on data drawn from the same distribution. In many contexts, it is desirable for the algorithm to produce consistent outputs across such runs. This idea is also connected to broader discussions of scientific reproducibility, in which repeated experiments are expected to yield consistent conclusions. Motivated by this perspective, a growing body of work in learning theory has introduced formal notions describing when and how randomized algorithms can be made replicable~\cite{BLM20,malliaris2022unstable,chase2023replicabilitystabilitylearning,dixon2023,bun2023stability,karbasi2023replicability,esfandiari2023replicable,Esfandiarietal23,moran2023bayesian,eaton2024replicable,kalavasis2024replicable,kalavasis2023statistical}.

One such notion is \emph{list replicability}, introduced in~\cite{chase2023replicabilitystabilitylearning,dixon2023}. Informally, the list replicability number of a concept class is the smallest integer $L$
for which there exists a learning algorithm whose set of likely output hypotheses has size at most  $L$ under every data distribution. This quantity also characterizes the amount of shared randomness required for replicable learning~\cite{ILPS22,hopkins2025}, and has found applications in differentially private learning~\cite{Alon_22_private_and_online,ghazi_user-level_2021}.

Although the definition of list replicability is purely algorithmic, there has been widespread success in reformulating it as a topological property of the space of distributions that are realizable in the given learning task
\cite{chase2023replicabilitystabilitylearning,dixon2023,localborsukulam,chornomaz2025spherical,blondal2025largemargin,blondal2025simplicial}. This perspective has enabled the use of tools from algebraic topology to derive general bounds on list size. In particular, \cite{localborsukulam} utilized a local version of the Borsuk-Ulam theorem as a powerful tool to lower-bound list replicability. They showed that for any finite cover of the $d$-sphere by antipodal-free open sets, there are at least $\lceil d / 2 \rceil + 1$ sets with a common nonempty intersection. This sphere covering result was used to show that the list size of any list-replicable learner for a concept class of VC dimension $d$ is at least $\lceil d/2\rceil+1 $, independent of the accuracy parameter $\epsilon\in (0,1/2)$. Furthermore, \cite{blondal2025largemargin} applied the sphere covering result to the class of large-margin half-spaces, showing that its list-replicability number lies between $\lceil d / 2 \rceil + 1$ and $d$.

In the local Borsuk–Ulam theorem of \cite{localborsukulam}, the factor of $1/2$ is inherent and cannot be improved. In contrast, in its application to list-replicability, this factor is an artifact of the proof technique. In particular, the lower bound of $\lceil d / 2 \rceil + 1$ on the list size is not tight: when the accuracy parameter satisfies $\epsilon < 1/d$, the list size must be at least $d$ \cite{chase2023replicabilitystabilitylearning}. Together with the persistent gap between known lower and upper bounds for the list replicability of large-margin halfspaces, this indicates that the appearance of the $1/2$ factor is due to limitations of the topological tools used so far, rather than an inherent barrier.

We confirm this hypothesis by introducing a novel sphere covering theorem. Instead of merely assuming antipodal-freeness, we impose the stronger condition that each open set in the cover of the $d$-sphere is contained in an open hemisphere. Under this assumption, we show that at least $d+1$ sets must have a common nonempty intersection. This result yields both sharper and more general bounds for list replicability across several settings.

Using our new sphere covering theorem, we prove that the list size of any list-replicable learner for a concept class of VC dimension $d$ is at least $d$ for any $\epsilon < 1/2$. Previously, this bound was only known for $\epsilon < 1/d$. As a consequence, the optimal list size for the concept class $\set{\pm1}^d$ is exactly $d$, independent of the accuracy parameter $\epsilon$. 

In the setting of large-margin half-spaces, we apply our sphere covering result to show that for margins $\gamma < 1 / \sqrt{2}$, the list replicability number is exactly $d$. In contrast, when $\gamma$ is very close to $1$, we construct an explicit learner showing that the list replicability number is $\lceil d / 2 \rceil + 1$. Together, these results show that the lower and upper bounds from \cite{blondal2025largemargin} are tight in the appropriate parameter regimes.

Finally, we study list replicability for large-margin half-spaces where the learner is restricted to outputting linear classifiers. For such algorithms, we show that the optimal list replicability is $d$, again matching the upper bound in \cite{blondal2025largemargin}.

\paragraph{Paper organization}

In \cref{sec:prelims} we collect the fundamental definitions broadly necessary for our main results. Definitions and theorems useful beyond this context are provided in their respective sections.

Our main results are contained in \cref{sec:main_results}. We begin by situating our sphere covering result, \cref{thm:main_topological_result}, within the broader framework of topological methods for list replicability. With this result in hand, we turn to our improved lower bound on $\epsilon$-list replicability of VC classes in \cref{thm:VC_bound}. Afterwards, we present our new lower and upper bounds on the list replicability number of large-margin half-spaces in  \cref{thm:margin_LR_lower_bound} and \cref{thm:very_large_margin}. Finally, an application of our topological result yields a tight lower bound on the list replicability number of linear classifiers in \cref{thm:linear_classifiers}.

As the proofs for \cref{thm:VC_bound,thm:margin_LR_lower_bound,thm:linear_classifiers} share many similarities, we have extracted those similarities into a framework theorem in \cref{sec:framework}, and deferred the full proofs to \cref{sec:VC_bound,sec:margin_LR_lower_bound,,sec:linear_classifiers} respectively.

\subsection{Preliminaries} \label{sec:prelims}

\paragraph{Partial concept classes.}
The framework of partial concept classes, introduced by Alon, Hanneke, Holzman, and Moran~\cite{alon2022theory}, extends classical learning theory to settings in which the data is guaranteed to satisfy additional structural assumptions that enable efficient learning. In particular, partial concept classes provide a natural framework for our study of large-margin half-spaces.

A \emph{partial concept class} over an arbitrary domain $\cX$ is a set 
$\cC \subseteq \{\pm 1, \star\}^{\cX}$, where each function $c \in \cC$ is called a \emph{partial concept}. We say that
a partial concept $c$ is \textit{undefined} at $x$ whenever $c(x)$ takes the star value $\star$. The \emph{support} of a partial concept $c$ is defined to be $\supp(c)\coloneq\set{x\in \cX : c(x)\neq \star}$. Note that a \emph{total} concept class $\cC\subseteq\set{\pm 1}^{\cX}$ is a special case of a partial concept class. 

\paragraph{PAC learning.}
In the \emph{probably approximately correct (PAC) learning} framework, the learner observes $n$ independent labeled examples
$S = ((x_1, y_1), \dots, (x_n, y_n))
$ sampled from an unknown but fixed distribution $\mu$ over $\mathcal{X} \times \{\pm 1\}$. The objective is to produce a \emph{hypothesis}
$h \colon \mathcal{X} \to \{\pm 1\}$
that, with high probability, performs well on the overall distribution, as measured by the \emph{population loss}
\[
\loss_\mu(h) \coloneqq \Pr_{(x, y) \sim \mu}[h(x) \neq y].\]

Note that $\mu$ is a distribution on $\cX \times \{\pm 1\}$, and therefore the learner never observes labels equal to $\star$. Moreover, the hypothesis produced by the learner should not contain $\star$ labels, since any such label would automatically be counted as an error.

We define a \emph{learning rule} to be a (possibly randomized) function $\cA$ that maps any sample
$S \in \bigcup_{n=0}^\infty (\cX \times \Set{\pm 1})^n$ 
to a hypothesis $\cA(S) \in \Set{\pm 1}^{\cX}$. Since our primary focus is learnability rather than computational efficiency, we impose no computability constraints on $\cA$. 

A distribution $\mu$ over $\cX\times\set{\pm1}$ is \emph{realizable} by a partial concept class $\cC$ if for every $n\in\mathbb{N}$, a random sample $S = ((x_i, y_i))_{i=1}^n \sim \mu^n$ is almost surely realizable by some $c \in \cC$, that is, $c(x_i) = y_i$ for all $i=1,\ldots,n$.

A partial concept $\cC$ is \emph{PAC learnable} by a learning rule $\cA$ if for any $\epsilon,\delta>0$, there exists a \emph{sample complexity} $n\coloneq n(\epsilon,\delta)$ such that for any realizable distribution $\mu$, we have 
\[
    \Pr_{S\sim\mu^n}
    [\loss_{\mu}(\cA(S)) \leq \epsilon]
    \geq 1-\delta.
\]

\paragraph{VC dimension.}
Let $A\subseteq \cX$ be a subset and $\cC$ be a partial concept class over $\cX$.  We say $\cC$ \emph{shatters} $A$ if
$\set{\pm 1}^A\subseteq \set{c|_A :~ c\in \cC}$. The VC dimension of $\cC$ is defined by
\[\VCdim(\cC)\coloneq \sup \set{ |A| :~A \subseteq \cX \text{ is shattered by $\cC$}}.\]
As in the total setting, it is shown in \cite[Theorem 3]{alon2022theory} that the finiteness of the VC dimension exactly characterizes PAC learnability for a partial concept class $\cC$. In fact, there exists a learning rule with sample complexity $O_{\epsilon,\delta}(\VCdim(\cC))$.

\paragraph{Large-margin half-spaces.}
One of the most prominent examples of a partial concept class is that of large-margin half-spaces. In this setting, we use undefined labels to formalize the large-margin assumption: data points are guaranteed to be well separated.  

In the large-margin half-spaces learning problem, the domain is the unit sphere $\bbS^{d-1} \subset \mathbb{R}^d$. For each $w\in\mathbb{S}^{d-1}$, we define a partial concept
\begin{align*}
     c_w(x) \coloneq \begin{cases}
        \sign(\langle w, x \rangle) &\text{if } |\langle w, x \rangle| \geq \gamma\\
        \star &\text{otherwise}
    \end{cases},
\end{align*}
where $\gamma>0$ is the margin parameter. Note that $c_w$ deems every $x$ within distance $\gamma$ from the hyperplane defined by $w$ as undefined.
Otherwise, it assigns $\pm 1$ depending on which side of the hyperplane $x$ is on. We denote by $\cH^d_{\gamma}$ the set  $\set{c_w : w \in \bbS^{d-1}}$ of all $\gamma$-margin partial concepts.
It is well known from the analysis of the Perceptron algorithm \cite{MR10388,rosenblatt1958perceptron} (see also \cite[Theorem 9.1]{shalev2014understanding} and \cite[Proposition 17]{alon2022theory}), that the \emph{Littlestone dimension} of $\cH^d_{\gamma}$ is bounded by $1/\gamma^2$.
Since the Littlestone dimension is a relaxation of the VC dimension, we also have 
\[\VCdim(\cH_{\gamma}^d)\leq \frac{1}{\gamma^2}\]
which is independent of the dimension $d$.
This implies that under the large-margin assumption $\gamma>0$, linear classification is efficiently PAC learnable even in high dimensions.

\paragraph{List replicability.}
List replicability was introduced in~\cite{chase2023replicabilitystabilitylearning,dixon2023} as a framework for studying the replicability of a learning problem while preserving the guarantees of PAC learning.

\begin{definition}[List replicability]\label{def:list}
A learning rule $\cA$ is an
$(\epsilon,L)$-\emph{list replicable learner} for a partial concept class $\cC$ if for every $\delta>0$ there exists a sample complexity $n \coloneq n(\delta)$ such that the following holds.
For every distribution $\mu$ realizable by $\cC$, there exists a list of hypothesis $h_1,\ldots,h_L \in \Set{\pm 1}^\cX$ such that 
\[
    \loss_\mu(h_i) \le \epsilon ~\forall i
    ~\text{ and } ~
    \Pr_{S \sim \mu^n} [\cA(S) \in \Set{h_1,\ldots,h_L}] \geq  1-\delta.
\]
The $\epsilon$-\emph{list replicability number} of $\cC$ is
\[
    \LR(\cC,\epsilon) \coloneqq \min\{L : \exists (\epsilon,L)\text{-list replicable learner for }\cC\},
\]
with $\LR(\cC,\epsilon)=\infty$ if none exists.
The \emph{list replicability number} of $\cC$ is 
\[
    \LR(\cC) \coloneqq \sup_{\epsilon>0} \LR(\cC,\epsilon).
\]
We say $\cC$ is \emph{list replicable} if $\LR(\cC)<\infty$.
\end{definition}

The goal in list replicability is to produce a PAC learner with the smallest possible list size.

\paragraph{Space of realizable distributions.} Given a partial concept class $\cC$, we define the set 
\begin{equation*}
\Delta_{\cC}\coloneq \set{\mu : \mu\;\text{realizable by}\; \cC }    
\end{equation*}
of all realizable distributions. When equipped with \emph{the total variation} (\text{TV}) \emph{distance}, $\Delta_{\cC}$
 forms a metric space, which we refer to as \emph{the space of realizable distributions} associated with $\cC$.

\paragraph{Topological terminology.}
Let $\mathbb{S}^d\subset \mathbb{R}^{d+1}$ denote the $d$-dimensional unit sphere. We say that a set $U\subseteq \mathbb{S}^d$ is \emph{antipodal-free} if it does not contain both a point $x$ and its antipode $-x$. We say a cover $\cU=\set{U_i}_{i\in I}$ of $\mathbb{S}^d$ is antipodal-free if $U_i$ is antipodal-free for all $i\in I$.

The \emph{overlap degree} of a cover $\cU=\set{U_i}_{i\in I}$ is the largest integer $k$ for which there exists $k$ sets $U_1, \dots, U_k\in \cU$ such that $\bigcap_{i=1}^k U_i\neq \emptyset$.

If $\cU=\set{U_i}_{i \in I}$ is an open cover, then a \emph{partition of unity} subordinate to $\cU$ is a collection of continuous maps $f_i \colon \bbS^d \to [0, 1]$ such that the support of $f_i$ is contained in $U_i$ for each $i \in I$, and $f_1 + \dots + f_m = 1$.

Any point $p \in \bbS^d$ defines a unique orthogonal homogeneous hyperplane in $\R^{d+1}$ given by
\[
    P = \set{x \in \R^{d+1} : \inp{p}{x} = 0}.
\]
For each point $w \neq p$ in $\bbS^d$, there is a unique line through $p$ and $w$, which intersects the plane $P$ in a unique point $w'$. The \emph{stereographic projection} through $p$ is the injective map $\pi \colon \bbS^d \setminus \set{p} \to P \cong \R^d$ given by taking $w \in \bbS^d$ to $w' \in P$.

\section{Main Results} \label{sec:main_results}

\subsection{Lower-bounding list replicability via topology}

The most well-established strategy to lower-bound the $\epsilon$-list replicability number $\LR(\cC,\epsilon)$ of a class $\cC$ hinges on exploiting the topological structure of its space of realizable distributions $\Delta_{\cC}$.

This method was independently introduced by Chase, Moran, and Yehudayoff \cite{chase2023replicabilitystabilitylearning} and by Dixon, Pavan, Woude, and Vinodchandarn \cite{dixon2023}, and was subsequently refined and extended in \cite{localborsukulam,vanderwoude2024,blondal2025largemargin,chornomaz2025spherical,blondal2025simplicial} to address a broader range of problems. These works employ a variety of topological results, such as the Poincaré-Miranda theorem \cite{chase2023replicabilitystabilitylearning}, KKM/Sperner's Lemma \cite{dixon2023,vanderwoude2024}, a Local Borsuk-Ulam theorem \cite{localborsukulam,blondal2025largemargin,chornomaz2025spherical}, and Lebesgue’s covering theorem \cite{blondal2025simplicial}, to obtain lower bounds on the list replicability number of a variety of concept classes.

In this vein, our first contribution is a sphere-covering result on the overlap degree of open covers with small diameters.  

\begin{restatable}{theorem}{butheorem}
\label{thm:main_topological_result}
    Let $A_1, \dots, A_m$ be a finite open cover of the $d$-sphere $\bbS^d$, where each $A_i$ is contained in an open hemisphere.
    Then $d+1$ of these sets have a common nonempty intersection.
\end{restatable}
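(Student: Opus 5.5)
The plan is to argue by contradiction through the nerve of the cover, using a partition of unity and the hemisphere condition to build a map of nonzero degree that factors through a low-dimensional complex. For each $i$, fix a unit vector $v_i\in\R^{d+1}$ with $A_i\subseteq\set{x\in\bbS^d : \inp{v_i}{x}>0}$. Let $N$ be the nerve of the cover, i.e. the simplicial complex on vertex set $\set{1,\dots,m}$ whose simplices are the index sets $\sigma$ with $\bigcap_{i\in\sigma}A_i\neq\emptyset$. Suppose, for contradiction, that no $d+1$ of the sets have a common point. Then every simplex of $N$ has at most $d$ vertices, so $\dim N\le d-1$.

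\emph{Step 1 (map into the nerve).} Take a partition of unity $(f_i)$ subordinate to $\set{A_i}$ and define $F\colon \bbS^d\to |N|$ by $F(x)=\sum_i f_i(x)e_i$, in barycentric coordinates. This is well defined: the indices with $f_i(x)>0$ satisfy $x\in A_i$, so they span a simplex of $N$. The map $F$ is clearly continuous.

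\emph{Step 2 (map the nerve to the punctured space).} Define $G\colon |N|\to\R^{d+1}$ affinely on each simplex by $e_i\mapsto v_i$. Let $\sigma$ be a simplex of $N$ and choose $y\in\bigcap_{i\in\sigma}A_i$. Then $\inp{v_i}{y}>0$ for all $i\in\sigma$, so every convex combination of $\set{v_i}_{i\in\sigma}$ has positive inner product with $y$ and is therefore nonzero. Hence $G$ maps $|N|$ into $\R^{d+1}\setminus\set{0}$. Now set $r(z)=z/|z|$ and $h=r\circ G\circ F\colon\bbS^d\to\bbS^d$.

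\emph{Step 3 (the composite has degree one).} For each $x$, $G(F(x))=\sum_i f_i(x)v_i$ is a convex combination of vectors $v_i$ with $x\in A_i$. So $\inp{h(x)}{x}>0$, and in particular $h(x)\neq -x$. The straight-line homotopy $(1-t)x+t\,h(x)$, normalized, therefore never passes through $0$. This gives a homotopy from $h$ to the identity, so $\deg h=1$.

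\emph{Step 4 (contradiction).} On $H_d$ the map $h$ factors through $H_d(|N|)$. Since $|N|$ is a simplicial complex of dimension at most $d-1$, we have $H_d(|N|;\mathbb{Z})=0$. Hence $h_*=0$ on $H_d(\bbS^d)\cong\mathbb{Z}$, which contradicts $\deg h=1$. Therefore some $d+1$ of the sets share a common point.

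The step I expect to need the most care is Step 2. A naive map defined on the full simplex spanned by all the $v_i$ could hit the origin. The fix is to restrict to the nerve itself and use a witness point $y$ in each face, which is exactly where the hemisphere hypothesis (rather than mere antipodal-freeness) is used. The remaining ingredients are standard: a partition of unity subordinate to a finite open cover of the compact metric space $\bbS^d$, and the vanishing of top homology above the dimension of a complex.
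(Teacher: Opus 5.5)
Your proof is correct. Its first half coincides with the paper's: the paper also builds $g(x)=\sum_i f_i(x)u_i$ from a partition of unity, normalizes it to $h$, and uses $\inp{x}{h(x)}>0$. It likewise uses a witness point to show that each face $\conv\set{u_i : i\in S}$ of the nerve misses the origin. The two routes split at the contradiction step.

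The paper first shows that $h$ is surjective using Borsuk--Ulam. If $p\notin h(\bbS^d)$, composing with stereographic projection from $p$ yields $w$ with $h(w)=h(-w)$, which is impossible given $\inp{x}{h(x)}>0$. It then argues by measure: if no $d+1$ sets meet, the image of $h$ lies in finitely many great subspheres $\bbS^d\cap\lspan\set{u_i : i\in S}$ with $|S|\le d$, which form a null set.

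You instead get $\deg h=1$ from the straight-line homotopy to the identity, and contradict it by factoring through $H_d(|N|)=0$. Your argument needs only that $h$ is homotopic to the identity and factors through a complex of dimension at most $d-1$; it uses no geometric control on the image. It also gives more: $F_*\colon H_d(\bbS^d)\to H_d(|N|)$ is injective, so the nerve carries a nontrivial $d$-cycle rather than just one $d$-simplex.

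The paper's argument avoids homology and degree, using only the classical Borsuk--Ulam theorem and linear algebra. Its measure step, however, genuinely depends on the piecewise-linear structure of $g$. Surjectivity alone would not suffice, since a surjection $\bbS^d\to\bbS^d$ can factor through an interval via a space-filling curve. Incidentally, your Step 3 already gives surjectivity directly, because a non-surjective self-map of $\bbS^d$ has degree $0$. So Borsuk--Ulam is not needed for that step of the paper's route either.
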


We refer the reader to \cref{lemma:framework} for a precise framework detailing the use of \cref{thm:main_topological_result} in proving lower bounds for list replicability.

\subsection{List replicability of finite VC classes}
In \cite{localborsukulam}, Chase, Chornomaz, Moran, and Yehudayoff asked whether concept classes with finite VC dimension can be learned in a list-replicable way with a small list size when the accuracy parameter $\epsilon$ is large. The following bounds are known:

\begin{theorem}[\cite{localborsukulam,chase2023replicabilitystabilitylearning}]
    Let $\cC$ be a concept class with $\VCdim$ dimension $d$.
    Then
    \begin{align*}
        \text{for $\epsilon < \frac{1}{2}$,}~
        \LR(\cC, \epsilon) &\geq \frac{d}{2} + 1\\
        \text{whereas for $\epsilon < \frac{1}{d}$,}~ \LR(\cC, \epsilon) &\geq d.
    \end{align*}
\end{theorem}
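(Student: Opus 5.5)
Both bounds come from the same reduction. Let $x_1,\dots,x_d$ be a set shattered by $\cC$, so that every distribution supported on $\{x_1,\dots,x_d\}\times\{\pm1\}$ with a deterministic label on each $x_i$ is realizable. I would map a topological space $X$ continuously, with respect to TV distance, into $\Delta_{\cC}$ via $p\mapsto\mu_p$, and then turn a hypothetical $(\epsilon,L)$-list replicable learner $\cA$ into an open cover of $X$.

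The cover is built as follows. Fix $\delta<1/(L+1)$ and the corresponding sample size $n$. For each hypothesis $h$ set
\[
U_h=\Set{p\in X : \Pr_{S\sim\mu_p^n}[\cA(S)=h]>\tfrac{1}{L+1}}.
\]
\begin{itemize}
\item \emph{Openness.} Since $\mathrm{TV}(\mu_p^n,\mu_q^n)\le n\,\mathrm{TV}(\mu_p,\mu_q)$, the map $p\mapsto\Pr[\cA(S)=h]$ is continuous, so each $U_h$ is open.
\item \emph{Covering.} At each $p$ the list $h_1,\dots,h_L$ carries mass at least $1-\delta$. Hence some $h_i$ has mass at least $(1-\delta)/L>1/(L+1)$, and the $U_h$ cover $X$.
\item \emph{Overlap.} The probabilities over distinct $h$ sum to at most $1$ and each exceeds $1/(L+1)$. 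So at most $L$ sets contain any point, i.e.\ the overlap degree is at most $L$.
\item \emph{Goodness.} If $p\in U_h$, then $h$ has probability greater than $\delta$, so $h$ must lie in the list for $\mu_p$, and therefore $\loss_{\mu_p}(h)\le\epsilon$.
\end{itemize}
By compactness of $X$ we pass to a finite subcover.

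For the bound when $\epsilon<1/2$, take $X=\bbS^{d-1}$ to be the $\ell_1$-sphere in $\R^d$. Let $\mu_p$ put mass $|p_i|$ on $(x_i,\sign p_i)$; this is TV-continuous, since coordinates with $p_i=0$ carry no mass. For any $h$ we have $\loss_{\mu_p}(h)+\loss_{\mu_{-p}}(h)=1$, so no $h$ is $\epsilon$-good for both $\mu_p$ and $\mu_{-p}$. By goodness, each $U_h$ is therefore antipodal-free. The local Borsuk--Ulam theorem of \cite{localborsukulam} then gives overlap degree at least $\lceil (d-1)/2\rceil+1$, hence $L$ is at least this. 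This matches the stated $d/2+1$ for even $d$. For odd $d$ it gives only $(d+1)/2$, and reaching the stated form exactly may need the sharper bookkeeping of \cite{localborsukulam}, which I have not checked.

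For the bound when $\epsilon<1/d$, I would instead use a Lebesgue-covering or Poincar\'e--Miranda argument on the cube $[-1,1]^d$, following \cite{chase2023replicabilitystabilitylearning}. The key observation concerns the face $t_i=\pm1$. There the distribution should put mass at least $1/d>\epsilon$ on $(x_i,\pm1)$, so any $\epsilon$-good $h$ must satisfy $h(x_i)=\pm1$. Consequently each $U_h$ avoids one of every pair of opposite faces, and Lebesgue's covering theorem forces overlap degree at least $d+1$; by the overlap bound above this yields $L\ge d$.

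The main obstacle is the construction of $t\mapsto\mu_t$ on the whole cube so that it is TV-continuous and every $\mu_t$ is realizable. The naive normalization $\mu_t(x_i,\sign t_i)\propto|t_i|$ breaks down at the center, and mixed labels at $t_i=0$ are not allowed. My first attempt would be to add an extra shattered point, or an extra coordinate carrying the leftover mass $1-\max_i|t_i|$ with a fixed label. Failing that, I would work only with the boundary sphere together with a Sperner-type argument.
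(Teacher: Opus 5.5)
\textbf{Second bound.} This part has a real gap, and the obstacle you identified at the center of the cube is fundamental, not technical. For $\cC=\{\pm1\}^{[d]}$, no TV-continuous family of realizable distributions on all of $[-1,1]^d$ with your face property can exist. If it did, Lebesgue's theorem would put some point in $d+1$ of the sets $U_h$, and your own overlap bound would then force $L\ge d+1$, not $L\ge d$ as you wrote. That contradicts $\LR(\{\pm1\}^{[d]},\epsilon)\le d$ from \cite{chase2023replicabilitystabilitylearning}. An extra domain point with a fixed label is also not available in general, since $\cC=\{\pm1\}^{[d]}$ has none.

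The repair is to use one of the shattered points as the reservoir and drop a dimension:
\begin{itemize}
\item For $t\in[-1,1]^{d-1}$, let $\mu_t$ put mass $|t_i|/d$ on $(x_i,\sign t_i)$ for $i<d$, and the remaining mass, which is at least $1/d$, on $(x_d,+1)$.
\item This family is TV-continuous and realizable.
\item On the face $t_i=\pm1$, the example $(x_i,\pm1)$ has mass $1/d>\epsilon$, so no $U_h$ meets two opposite faces.
\item Lebesgue's theorem on the $(d-1)$-cube then gives a point in $d$ sets, i.e.\ $L\ge d$.
\end{itemize}
Your fallback also works. On $\partial[-1,1]^d$ with $\mu_t\propto|t_i|$ the face property holds. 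Extend the cover radially over the shell $\{\|t\|_\infty>1/2\}$ and add the interior set $\{\|t\|_\infty<2/3\}$. Lebesgue's theorem on the $d$-cube gives a point in $d+1$ sets, at most one of which is the interior set, so its radial projection lies in $d$ sets of the original cover.

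\textbf{First bound.} The odd-$d$ shortfall comes from applying the introduction's phrasing with $d-1$ in place of $d$. On $\bbS^{d-1}$, the local Borsuk--Ulam theorem of \cite{localborsukulam} gives $\lceil d/2\rceil+1\ge d/2+1$, which is the form the cited bound uses. Note also that the first bound genuinely needs $d\ge2$: a class on a single point has $\LR=1$.

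More importantly, you need neither cited theorem, because your goodness property gives more than antipodal-freeness. On the $\ell_1$-sphere, with $u_h=(h(x_1),\dots,h(x_d))$,
\[
\loss_{\mu_p}(h)=\tfrac12\bigl(1-\inp{u_h}{p}\bigr).
\]
So $p\in U_h$ forces $\inp{u_h}{p}\ge 1-2\epsilon>0$, and every $U_h$ lies in an open hemisphere. Apply \cref{thm:main_topological_result} on $\bbS^{d-1}$, after radial projection and passing to a finite subcover. It gives a point in $d$ sets, hence $L\ge d$ for every $\epsilon<1/2$, which yields both displayed bounds at once for $d\ge2$.

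This is exactly the paper's route. The paper does not prove the cited statement separately; it derives the stronger \cref{thm:VC_bound} this way, using the same sphere of distributions and the same hemispheres. Your repaired cube argument is a legitimate alternative in the spirit of the Poincar\'e--Miranda argument of \cite{chase2023replicabilitystabilitylearning}. It is more elementary, but it is inherently tied to $\epsilon<1/d$.
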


Our first application of \cref{thm:main_topological_result} is to show that $d$ is the correct lower bound for the full range $\epsilon \in [0,1/2)$.

\begin{restatable}{atheorem}{theoremA}\label{thm:VC_bound}
    Let $\cC$ be a (partial) concept class with $\VCdim$ dimension $d$.
    \begin{equation*}
        \text{For any $\epsilon < \frac{1}{2}$,}~
        \LR(\cC, \epsilon) \geq d.
    \end{equation*}
\end{restatable}

Note that for the binary cube $\cC = \{\pm 1\}^d$, we have $\VCdim(\cC) = d$ and $\LR(\cC, \epsilon) \le d$ for all $\epsilon < 1/2$ \cite{chase2023replicabilitystabilitylearning}. Consequently, \cref{thm:VC_bound} implies that the $\epsilon$-list replicability number of the binary cube equals $d$ for all $\epsilon < 1/2$. In other words, increasing the accuracy parameter $\epsilon$ does not yield any improvement in list replicability for some finite VC classes.

\subsection{List replicability of large-margin half-spaces}

The list replicability number of a concept class provides bounds on various learning theory and communication complexity parameters such as VC dimension \cite{chase2023replicabilitystabilitylearning,localborsukulam}, Littlestone dimension \cite{Alon_22_private_and_online,ghazi2021sample}\footnote{The bounds in \cite{Alon_22_private_and_online,ghazi2021sample} are not explicitly stated but can be found in \cite{chase2023replicabilitystabilitylearning}.}, and sign-rank \cite{blondal2025largemargin,blondal2025simplicial}. In particular, several previously open questions in these areas were resolved by studying the list replicability number of large-margin half-spaces \cite{blondal2025largemargin}. The explicit bounds on the list replicability number of the class depended only on the underlying dimension $d$ of the problem:

\begin{theorem}[\cite{blondal2025largemargin}]
\label{thm:lower_upper_large}
    For any fixed dimension $d>1$, margin $\gamma \in (0,1)$,  and accuracy parameter $\epsilon \in (0,1/2)$,
\[
\left \lceil\frac{d}{2}\right \rceil+1  \leq  \LR(\cH_\gamma^d,\epsilon)  \leq  d.
\]
Hence, $\left \lceil\frac{d}{2}\right \rceil+1 \leq \LR(\cH_\gamma^d) \leq d$.
\end{theorem}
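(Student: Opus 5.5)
The statement has two parts, and I would prove them separately. The upper bound comes from an explicit learner. The lower bound comes from the local Borsuk--Ulam covering theorem of \cite{localborsukulam}, applied to a continuous family of realizable distributions.

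\emph{Upper bound $\LR(\cH_\gamma^d,\epsilon)\le d$.} The first step is to run a standard PAC learner for $\cH^d_\gamma$, such as the Perceptron, whose sample complexity depends only on $\gamma$ and $\epsilon$. Its output is used to estimate, with high confidence, a vector $\hat w$ lying in a compact set $K$ of directions. Every point of $K$ induces a classifier with loss at most $\epsilon/2$ with respect to $\mu$.

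The second step is replicable rounding. I would fix once and for all a finite cover of $\bbS^{d-1}$ by sets of small diameter whose overlap degree is exactly $d$, which is the optimal overlap for a $(d-1)$-dimensional space by Lebesgue's covering theorem. Each cover element is assigned one fixed output hypothesis. The learner estimates the loss landscape to accuracy much smaller than the cover's Lebesgue number, and then outputs the hypothesis of a cover element determined canonically from the estimate.

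The list size is then controlled as follows. With high probability the estimates concentrate in a small ball, and a small ball meets at most $d$ cover elements. So at most $d$ hypotheses occur with probability at least $1-\delta$, and each has loss at most $\epsilon$ once the cover's diameter is small relative to $\gamma$ and $\epsilon$. The margin is what makes the losses continuous in $w$, so that small-diameter cover elements give uniformly good hypotheses.

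\emph{Lower bound.} The plan is to construct a map $u\mapsto\mu_u$ from a sphere into $\Delta_{\cH_\gamma^d}$ with two properties. First, it should be continuous in TV distance. Second, it should be antipodally symmetric: $\mu_{-u}$ is $\mu_u$ with all labels flipped.

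The symmetry is what creates the antipodal-free cover. For any hypothesis $h$ we have $\loss_{\mu_u}(h)+\loss_{\mu_{-u}}(h)=1$, so for $\epsilon<1/2$ no $h$ is $\epsilon$-good for both $\mu_u$ and $\mu_{-u}$. Given an $(\epsilon,L)$-list replicable learner, for each hypothesis $h$ define
\[
U_h=\set{u : \Pr_{S\sim\mu_u^n}[\cA(S)=h] > \delta'}.
\]
Continuity of $u\mapsto\mu_u^n$ in TV makes each $U_h$ open. For suitably small $\delta'$, these sets cover the sphere, and each is antipodal-free by the loss identity above. A compactness argument reduces to finitely many of them.

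At a point lying in $k$ of these sets, the corresponding $k$ hypotheses each have probability more than $\delta'$. Taking $\delta'$ small compared with $1/L$ and $\delta$, this forces $L\ge k$. The local Borsuk--Ulam theorem then gives $k\ge\lceil \dim/2\rceil+1$.

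\emph{Main obstacle.} The hard step is making the sphere in the construction have dimension $d$ rather than $d-1$, while keeping every $\mu_u$ realizable with margin $\gamma$. The naive choice, $\mu_u$ placing mass $|u_i|$ on $d$ fixed points labeled $\sign(u_i)$, only yields $\bbS^{d-1}$. That would give $\lceil (d-1)/2\rceil+1$.

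My first attempt would exploit the continuum of the domain. I would write $u=(v,t)\in\bbS^{d}$ with $v\in\R^{d}$ and $t\in\R$, and let $\mu_u$ be supported on points $x$ near the direction $v/|v|$, labeled via the affine functional $\inp{v}{x}+t$. The weights would be chosen to vanish continuously wherever the margin would fail. I would then check that each $\mu_u$ is realized by $c_w$ for a suitable $w$, and that the map stays continuous at the poles $v=0$. This verification is the step I expect to be delicate.
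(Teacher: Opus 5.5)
Your lower-bound plan is built around an obstacle that does not exist. The local Borsuk--Ulam theorem of \cite{localborsukulam} gives, for any finite antipodal-free open cover of $\bbS^k$, a point lying in at least $\lceil (k+3)/2\rceil$ sets. For $k=d-1$ this is $\lceil (d+2)/2\rceil=\lceil d/2\rceil+1$, exactly the claimed bound. This sharp form is what yields the quoted VC bound $\LR(\cC,\epsilon)\ge\tfrac d2+1$ from $\{\pm1\}^d$, whose realizable distributions form $\bbS^{d-1}$ as in the proof of \cref{thm:VC_bound}. The introduction's informal ``$\lceil d/2\rceil+1$ for the $d$-sphere'' is a weaker form.

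So you do not need a $d$-sphere, and the one you plan cannot exist in general. For even $d$ and $\gamma>\gamma_0(d)$, a TV-continuous map $\bbS^d\to\Delta_{\cH^d_\gamma}$ with $\mu_{-u}$ the label flip of $\mu_u$ would, by your own argument, give $\LR(\cH^d_\gamma,\epsilon)\ge\lceil (d+3)/2\rceil=d/2+2$. That contradicts \cref{thm:very_large_margin}. For odd $d$ it would give nothing new.

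Your ``naive'' map is also the wrong one. Putting mass on $d$ fixed points with arbitrary signs requires shattering them, which fails once $\gamma>1/\sqrt d$ because $\VCdim(\cH^d_\gamma)\le 1/\gamma^2$. The map that works for every $\gamma\in(0,1)$ is the one the paper uses in \cref{sec:margin_LR_lower_bound,sec:linear_classifiers}: $w\mapsto\mu_w$, uniform on $\{(x,c_w(x)):x\in\supp(c_w)\}$. It is realizable by $c_w$ and TV-continuous. Moreover $\mu_{-w}$ is the label flip of $\mu_w$, because $c_{-w}=-c_w$ on the same support. With this map, your loss identity and your $U_h$ argument finish the proof.

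Fix the threshold, though. Antipodal-freeness needs $\delta'\ge\delta$, so that any $h$ output with probability $>\delta'$ lies in the $\epsilon$-good list. Covering needs $\delta'<(1-\delta)/L$. Finally, $k\delta'<1$ yields $k\le L$ only when $\delta'$ is close to $1/L$, e.g.\ $\delta'=(1-2\delta)/L$ with $\delta<1/(2L+2)$, not ``small compared with $1/L$''. Alternatively, put $\loss<\epsilon_0\le 1/2$ into $U_h$ as in \cref{lemma:framework}.

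The upper bound has a genuine gap. Everything rests on ``the estimates concentrate in a small ball'' and ``a cover element determined canonically from the estimate'', and neither is supplied. The output of the Perceptron, or of any PAC learner, does not concentrate: different samples give different consistent directions, spread over a version space that need not be small. You also give no selection rule, from an estimated loss landscape, that is stable under small $\sup$-norm perturbations.

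Nor does the margin make $w\mapsto\loss_\mu(h_w)$ continuous. Atoms of $\mu$ near the hyperplane of a non-realizing $w$ flip under tiny rotations. The robustness actually available is this: if $c_{w_0}$ realizes $\mu$, then $\loss_\mu(h_w)=0$ whenever $\|w-w_0\|<\gamma$.

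What your scheme needs is a statistic that concentrates, with sample size independent of $\mu$, around a direction $w^\ast$ such that every $w$ within distance comparable to $\gamma$ of $w^\ast$ gives an $\epsilon$-good $h_w$. It also needs a cover in which every ball of some fixed radius meets at most $d$ cells. Pointwise overlap $d$ does not give this, though closed shrinkings plus compactness do.

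Producing such a statistic is the substantive part of the upper bound in \cite{blondal2025largemargin}, which this paper only cites. There it rests on generalization properties of SVM and a triangulation of the cross-polytope. Your sketch does not supply it.
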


These general bounds left open the question of precisely how the margin $\gamma$ and accuracy parameter $\epsilon$ affect the $\epsilon$-list replicability number, if at all. We give a partial answer to this question. Our second application of \cref{thm:main_topological_result} improves the lower bounds for $\epsilon$-list replicability of large-margin half-spaces for a wide range of margins $\gamma$. This determines the exact value of the list replicability number in this range.

\begin{restatable}{atheorem}{marginlowerbound}
\label{thm:margin_LR_lower_bound}
    For any  dimension $d > 1$ and margin $\gamma \in (0, \frac{1}{\sqrt{2}})$, there exists an accuracy parameter $\epsilon \coloneqq \epsilon(\gamma, d)$ such that
    \[
        \LR(\cH^d_\gamma, \epsilon) \geq d.
    \]
    Hence, $\LR(\cH_\gamma^d)=d$ for all $\gamma\in (0,1/\sqrt{2})$.
\end{restatable}

As a complement to this result, we examine the setting in which the margin $\gamma$ is relatively large. In this setting, we prove a tight upper bound on the $\epsilon$-list replicability number.

\begin{restatable}{atheorem}{marginupperbound}
\label{thm:very_large_margin}
  For any dimension $d > 1$ and accuracy parameter $\epsilon\in [0,1/2)$, there exists some $\gamma_0\coloneq \gamma_0(d) \in(0,1)$ such that for any margin $\gamma\in (\gamma_0,1)$ we have
    \[
        \LR(\cH^d_\gamma, \epsilon) \leq  \left \lceil\frac{d}{2}\right \rceil+1.
    \]
    Hence, $\LR(\cH_\gamma^d)=\lceil \frac{d}{2}\rceil+1$ for all $\gamma\in (\gamma_0(d),1)$.
\end{restatable}
This proof uses an explicit learning rule which exploits a particular cover originally given by Chase, Chornomaz, Moran, and Yehudayoff (\cite{localborsukulam}, see \cref{thm:cover_small_overlap} for a description), in conjunction with the Lebesgue Number Lemma (\cref{thm:lebesg_number}).

\subsection*{Linear classifiers}

Our third and final application of \cref{thm:main_topological_result} addresses the question of list-replicably learning the large-margin half-space problem with linear classifiers.
A hypothesis on the domain $\bbS^{d-1}$ is a \emph{linear classifier} if it is of the form
\begin{align*}
     h_w(x) \coloneq \begin{cases}
        1 &\text{if } \langle w, x \rangle \geq 0,\\
        -1 &\text{if } \langle w, x \rangle < 0
    \end{cases}
\end{align*}
for some $w \in \bbS^{d-1}$.

The upper bound $\LR(\cH^d_\gamma, \epsilon) \leq d$ from \cref{thm:lower_upper_large} uses a learning rule which always outputs a linear classifier. We show that, under that restriction, a list of size $d$ is best possible.

\begin{restatable}{atheorem}{linearclassifiers}
\label{thm:linear_classifiers}
    Fix a dimension $d > 1$ and a margin $\gamma \in (0,1)$. For any error parameter $\epsilon \in (0,1/2)$ and list length $L > 1$, if $\cA$ is an $(\epsilon,L)$-list replicable learner for $\cH^d_\gamma$ which outputs linear classifiers, then $L \geq d$.
\end{restatable}

\subsection{Open Questions}\label{section:concluding}
In \cref{thm:VC_bound}, we remove the optimal lower bound's dependency on $\epsilon$, while we get a new dependency on $\epsilon$ in the lower bound of $\LR(\cH_\gamma^d, \epsilon)$.
We wonder if list replicability varies with $\epsilon$.

\begin{question}
    Is there a total or partial concept class $\cC$ and two error parameters $\epsilon_1, \epsilon_2 \in (0, \frac{1}{2})$ with $\epsilon_1 < \epsilon_2$ such that
    \[
        \LR(\cC, \epsilon_1) > \LR(\cC, \epsilon_2)?
    \]
\end{question}

While we have solved $\LR(\cH_\gamma^d)$ for many of the most applicable regimes, the behaviour of the list replicability number is merely bounded in other regimes.
Looking at \cref{fig:large_margin_regimes}, we ask what the transition between the top and bottom regimes looks like.

\begin{question}
    For which parameters can the bound $\LR(\cH_\gamma^d, \epsilon) = d$ be established?
\end{question}

\begin{question}
    Are there values of $\gamma \in (0, 1), \epsilon \in (0, \frac{1}{2})$ such that $\LR(\cH_\gamma^d, \epsilon)$ takes on every integer value in the range $[\lceil \frac{d}{2} \rceil + 1, d]$?
\end{question}

\begin{figure}[H]
    \begin{center}
        \begin{tikzpicture}[xscale=8, yscale=4]
            \def\divideroottwo{0.7071}
            \def\gammazero{0.85}
            \def\rootd{0.13}
            \def\xcurve{0.1}
            \def\ycurve{0.3}
            
            \fill[blue!20] (0,\gammazero) rectangle (0.5,1);
            \node[align=center] at (0.25, 0.925) {$\LR = \lceil \frac{d}{2} \rceil + 1$};
            
            \fill[yellow!30] (0,0) rectangle (0.5,\gammazero);
            \node[align=center] at (0.25, 0.5) {$\lceil \frac{d}{2} \rceil + 1 \leq \LR \leq d$};
            
            \fill[green!30] (0,0) -- (0,\divideroottwo) .. controls (\xcurve,\ycurve) .. (0.25,0) -- (0,0);

            \draw[yellow!100, thick] (0,\gammazero) -- (0.5,\gammazero);
            \draw[yellow!100, thick] (0,\divideroottwo) .. controls (\xcurve,\ycurve) .. (0.25,0) -- (0.5,0);

            \fill[green!30] (0,0) -- (0,\rootd) -- (0.5,\rootd) -- (0.5,0);
            \node at (0.08, 0.1) {$\LR = d$};

            \draw[yellow!100, thick] (0.18,0.1415) -- (0.185,\rootd) -- (0.5,\rootd);
        
            \draw[thick] (0,0) rectangle (0.5,1);

            \draw (-0.01,0) -- (0,0);
            \draw (-0.01,\rootd) -- (0,\rootd);
            \draw (-0.01,\divideroottwo) -- (0,\divideroottwo);
            \draw (-0.01,\gammazero) -- (0,\gammazero);
            \draw (-0.01,1) -- (0,1);

            \draw (0,-0.02) -- (0,0);
            \draw (0.5,-0.02) -- (0.5,0);

            \node[left=30pt] at (0,0.5) {\Large $\gamma$};
            \node[left=2pt] at (0,0) {$0$};
            \node[left=2pt] at (0,\rootd) {$\frac{1}{\sqrt{d}}$};
            \node[left=2pt] at (0,\divideroottwo) {$\frac{1}{\sqrt{2}}$};
            \node[left=2pt] at (0,\gammazero) {$\gamma_0(d)$};
            \node[left=2pt] at (0,1) {$1$};
            \node[below=12pt] at (0.25,0) {\Large $\epsilon$};
            \node[below=2pt] at (0,0) {$0$};
            \node[below=2pt] at (0.5,0) {$\frac{1}{2}$};
        \end{tikzpicture}
    \end{center}
    \vspace*{-5mm}
    \caption{Regimes for $\LR(\cH_\gamma^d, \epsilon)$ when $d>1$.
    When $\gamma < \frac{1}{\sqrt{d}}$, it is well known that $\VCdim(\cH_\gamma^d) = d$, so we know from \cref{thm:VC_bound} that $\LR = d$.}
    \label{fig:large_margin_regimes}
\end{figure}\label{fig:replicability_regimes}

\section{Proof of \cref{thm:main_topological_result}}

In this section, we prove our main topological result.

\butheorem*

We will use the following classical form of the Borsuk-Ulam theorem.

\begin{theorem}[{\cite[Theorem 2.1.1]{matousek2003borsuk}}]\label{thm:Borsuk-Ulam_matousek}
For every continuous mapping $f \colon \mathbb{S}^d\rightarrow\mathbb{R}^d$, there exists a point $x\in\mathbb{S}^d$ with $f(x)=f(-x)$. 
\end{theorem}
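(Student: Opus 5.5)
We argue by contradiction and reduce the statement to a degree computation for odd maps. Given a continuous $f \colon \bbS^d \to \R^d$, define $g(x) \coloneq f(x) - f(-x)$. Then $g$ is continuous and \emph{odd}, that is, $g(-x) = -g(x)$. A point with $f(x) = f(-x)$ is exactly a zero of $g$. Suppose $g$ has no zero. Then $h(x) \coloneq g(x)/\norm{g(x)}$ is a continuous odd map $\bbS^d \to \bbS^{d-1}$. It therefore suffices to prove the following.

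\begin{claim*}
There is no continuous odd map $\bbS^d \to \bbS^{d-1}$.
\end{claim*}

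To prove the claim, we view $\bbS^{d-1} \subset \bbS^d$ as the equator $\set{x_{d+1} = 0}$. The restriction $h|_{\bbS^{d-1}}$ is an odd self-map of $\bbS^{d-1}$. It extends continuously over the closed upper hemisphere, which is homeomorphic to a $d$-dimensional disk with boundary $\bbS^{d-1}$. Hence $h|_{\bbS^{d-1}}$ is null-homotopic, and its degree is $0$. The claim then follows from the key lemma below, which contradicts this.

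\begin{claim*}[Key lemma]
Every continuous odd map $\phi \colon \bbS^{n} \to \bbS^{n}$ has odd degree.
\end{claim*}

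We prove the key lemma by induction on $n$, using $\mathbb{Z}/2$-coefficients so that only degrees mod $2$ matter. For $n=0$ the statement is immediate, since an odd map of $\set{\pm 1}$ is a bijection. For $n = 1$ we can argue directly with lifts. Writing $\phi(e^{i\theta}) = e^{i\psi(\theta)}$, oddness gives $\psi(\theta + \pi) = \psi(\theta) + (2k+1)\pi$ for some integer $k$. Applying this twice shows that the degree is $2k+1$, which is odd.

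For general $n$ we use the transfer, or equivalently the Gysin sequence, of the double cover $\bbS^n \to \mathbb{RP}^n$. An odd $\phi$ descends to a map $\bar\phi \colon \mathbb{RP}^n \to \mathbb{RP}^n$. Since $\phi$ maps antipodal pairs to antipodal pairs, $\bar\phi$ lifts loops that are nontrivial in the cover to paths that are nontrivial in the cover. Consequently $\bar\phi^*$ fixes the generator $a \in H^1(\mathbb{RP}^n;\mathbb{Z}/2)$. Because $H^*(\mathbb{RP}^n;\mathbb{Z}/2) = \mathbb{Z}/2[a]/(a^{n+1})$ is generated by $a$ as a ring, it follows that $\bar\phi^*(a^n) = a^n \neq 0$, so $\bar\phi$ has mod-$2$ degree $1$. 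Comparing top-dimensional classes through the $2$-sheeted covering then shows that $\deg \phi \equiv \deg \bar\phi \equiv 1 \pmod 2$.

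The main obstacle is this last identification, namely showing that the mod-$2$ degree of $\phi$ equals that of $\bar\phi$. I would handle it by a direct cell-level count instead of the transfer. First, homotope $\phi$ through odd maps to a smooth (or simplicial) odd map, for example by averaging a smooth approximation $\tilde\phi$ as $x \mapsto \bigl(\tilde\phi(x) - \tilde\phi(-x)\bigr)/\norm{\cdot}$. Then pick a regular value $y$. The preimages of $y$ and $-y$ are swapped by the antipodal map, and each antipodal pair of preimages corresponds to a single preimage of $\bar y$ under $\bar\phi$. Counting these pairs mod $2$ relates the two degrees directly.

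If this becomes too technical, the fallback is the purely combinatorial route through Tucker's lemma. Assuming no zero of $g$, label the vertices of a fine antipodally symmetric triangulation of the cross-polytope boundary by $\pm i$, where $i$ is the index of the largest-magnitude coordinate of $g$ and the sign is its sign. Tucker's lemma forbids such a labeling having no complementary edge, so some edge carries opposite labels $+i$ and $-i$. As the mesh goes to $0$, uniform continuity forces a zero of $g$.
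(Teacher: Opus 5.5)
The paper gives no proof of this statement. It is the classical Borsuk--Ulam theorem, quoted as a black box from Matou\v{s}ek's book and used only in the surjectivity step of the proof of \cref{thm:main_topological_result}. So the comparison is with the literature, and your proposal is a correct self-contained proof along a different route.

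Your main line is the homological one. You reduce to the nonexistence of an odd map $\bbS^d\to\bbS^{d-1}$, restrict to the equator, and show that odd self-maps of $\bbS^n$ have odd degree via the double cover $p\colon\bbS^n\to\mathbb{RP}^n$ and the ring $H^*(\mathbb{RP}^n;\mathbb{Z}/2)$. Your fallback via Tucker's lemma is essentially the combinatorial proof in the cited source. Your largest-coordinate labeling is antipodal because $g$ is odd, and the limit over triangulations with mesh tending to $0$ does produce a zero of $g$.

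The one step in the main line that needs care is comparing $\deg\phi$ with $\deg\bar\phi$. It cannot go through $p_*$: since $p$ has degree $2$, $p_*=0$ on $H_n(\,\cdot\,;\mathbb{Z}/2)$, so the relation $p_*\phi_*=\bar\phi_*p_*$ says nothing. Either of two things does work.
\begin{itemize}
\item The transfer $\tau_*\colon H_n(\mathbb{RP}^n;\mathbb{Z}/2)\to H_n(\bbS^n;\mathbb{Z}/2)$ is an isomorphism in top degree and satisfies $\phi_*\tau_*=\tau_*\bar\phi_*$ for equivariant $\phi$.
\item Your regular-value count is correct as stated. If $y$ is regular for $\phi$ then so is $-y$, and $x\mapsto\bar x$ is a bijection from $\phi^{-1}(y)$ onto $\bar\phi^{-1}(\bar y)$. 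For the smoothing, take $\tilde\phi$ uniformly within distance $1$ of $\phi$, so that $\tilde\phi(x)-\tilde\phi(-x)$ never vanishes.
\end{itemize}

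The announced induction on $n$ is not actually used. The cup-product argument handles every $n\ge 1$ at once, and only the case $n=0$ (needed when $d=1$) stands on its own.

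As for what each choice buys: citing keeps the paper focused on its new covering theorem. Your homological proof is short but relies on the cohomology ring of $\mathbb{RP}^n$ together with either the transfer or smooth mod-$2$ degree theory. The Tucker route trades that machinery for a purely combinatorial lemma plus a uniform-continuity argument.
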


\begin{proof}[Proof of \cref{thm:main_topological_result}:]
    Let $\cA = \set{A_1, \dots, A_m}$ be a finite open cover of $\bbS^d$ such that there exist $u_1, \dots, u_m \in \bbS^d$ with
    \[
        A_i \subseteq \{x \in \bbS^d: \langle x, u_i \rangle > 0\}.
    \]
    
    Since $\bbS^d$ is a compact Hausdorff space, there is a partition of unity subordinate to $\cA$ \cite[Theorem 2.13]{rudin1987real}.
    That is, there exist continuous maps $f_i \colon \bbS^d \to [0, 1]$ such that the support of $f_i$ is contained in $A_i$
    for each $i$, and
    $f_1 + \dots + f_m = 1$.

    Consider the continuous map $g \colon \bbS^d \to \R^{d+1}$ given by
    \[
        g(x) \coloneq \sum_{i=1}^m f_i(x) u_i.
    \]
    Note that $\langle x, g(x)\rangle > 0$ for all $x$, and in particular $g$ is never zero. It follows that $h(x) \coloneq g(x)/\norm{g(x)}$ is a well-defined, continuous map from $\bbS^d$ to $\bbS^d \subset \R^{d+1}$. Moreover, $h$ inherits the property that $\langle x, h(x) \rangle > 0$ for all $x$.
    
    We claim that $h$ must be surjective.
    Indeed, assuming otherwise, let $p \in \bbS^d$ be a point outside the image of $h$, and let $\pi \colon \bbS^d \to \R^d$ be the stereographic projection through $p$. By \cref{thm:Borsuk-Ulam_matousek}, the continuous map $\pi \circ h \colon \bbS^d \to \R^d$ must identify some antipodal pair $w$ and $-w$.
    Since stereographic projection is bijective, it follows that $h(w) = h(-w)$. This contradicts the fact that $\langle x, h(x) \rangle > 0$ for all $x$.

    To complete the proof, we will show that the surjectivity of $h$ guarantees the overlap of at least $d+1$ sets of $\cA$. The image of $g$ is a subset of the space $T$, composed of the union of a finite number of convex sets.
    \begin{align*}
        T &\coloneqq \bigcup ~\set{\sigma_S : \text{for all}~S \subset [m] ~\text{where}~ \bigcap_{i \in S} A_i \neq 0}\\
        \text{where}~ \sigma_S &\coloneqq \conv\set{u_i: i \in S}
    \end{align*}
    Since we only consider $S$ where the corresponding open sets overlap, for each $S$ there is some $x_S$ with $\langle x_S, u_i \rangle > 0$ for all $i$ in $S$.
    This means that $\sigma_S$ doesn't intersect $0$, and its radial projection onto the sphere lies in a vector space of dimension at most $|S|$:
    \begin{align*}
        \Set{\frac{x}{|x|}: x \in \sigma_S} \subseteq 
        \bbS^d \cap \lspan\set{u_i : i \in S}.
    \end{align*}
    Finally, if this vector space has dimension less than $d+1$, its intersection with $\bbS^d$ has measure $0$ in $\bbS^d$.
    Therefore, if no $d+1$ sets intersect, then the image of $h$ is contained in a finite union of sets with $0$ measure, which itself has $0$ measure, contradicting its surjectivity.
\end{proof}

\section{The topological method in list replicability} \label{sec:framework}

In this section, we describe a framework for applying \cref{thm:main_topological_result} to obtain lower bounds on list replicability. This framework is applied to prove \cref{thm:VC_bound,thm:margin_LR_lower_bound,thm:linear_classifiers}, so we explain it in detail, with its applications deferred to \cref{sec:VC_bound,sec:margin_LR_lower_bound,sec:linear_classifiers} respectively.

The overarching strategy begins by identifying a subset of realizable distributions, $\Delta$, that, when endowed with the total variation metric, is homeomorphic to a high-dimensional sphere or ball. The existence of an $(\epsilon,L)$-list replicable algorithm $\cA$ then induces an open cover $\cU$ of this subset whose overlap degree lower bounds the list size $L$. If $\cU$ satisfies additional structural constraints, such as bounds on the size of the open sets or antipodal freeness, then we invoke a suitable sphere covering theorem to show that any open cover satisfying these properties must exhibit a large overlap, thereby yielding the desired lower bound.

For our purposes, \cref{thm:main_topological_result} is the appropriate sphere covering theorem. Our specific realization of the strategy above is described in the following lemma.

\begin{lemma}\label{lemma:framework}
    Let $\cA$ be an $(\epsilon,L)$-list replicable learning algorithm for a concept class $\cC \subseteq \set{\pm1,\star}^\cX$ with sample complexity $n \coloneqq n(\delta)$. Fix a dimension $d \geq 1$ and a confidence parameter $\delta \leq \frac{1}{2d}$, and suppose that, for some subset of realizable distributions $\Delta \subseteq \Delta_\cC$, there is a continuous map $\varphi \colon \bbS^{d-1} \to \Delta$.
    
    If there is an $\epsilon_0 > \epsilon$ such that, for every output $h$ of $\cA$, the set
    \[
        U_h \coloneqq \Set{
        w \in \bbS^{d-1} ~:~
        \Pr_{S \sim \varphi(w)^n}[\cA(S) = h] > \frac{1-2\delta}{L}
        ~\text{and}~
        \loss_{\varphi(w)}(h) < \epsilon_0
        }
    \]
    is contained in an open hemisphere of $\bbS^{d-1}$, then $L \geq d$. If this holds for any $\cA$ then  $\LR(\cC,\epsilon) \geq d$.
\end{lemma}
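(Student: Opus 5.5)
The plan is to show that the sets $U_h$, as $h$ ranges over outputs of $\cA$, form an open cover of $\bbS^{d-1}$ meeting the hypotheses of \cref{thm:main_topological_result} (with $d-1$ in place of $d$). That theorem then supplies a point $w$ lying in $d$ distinct sets $U_{h_1},\dots,U_{h_d}$. The counting is immediate: the events $\{\cA(S)=h_j\}$ are disjoint, so
\[
1 \ \ge\ \sum_{j=1}^d \Pr_{S\sim\varphi(w)^n}[\cA(S)=h_j] \ >\ d\cdot\frac{1-2\delta}{L},
\]
which gives $L > d(1-2\delta) \ge d-1$, using $\delta\le \frac{1}{2d}$. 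Since $L$ is an integer, $L\ge d$.

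\textbf{Openness.} I will check that each $U_h$ is open in $\bbS^{d-1}$.
\begin{itemize}
\item The map $\mu\mapsto \Pr_{S\sim\mu^n}[\cA(S)=h]$ is continuous in total variation, because $\mathrm{TV}(\mu^n,\nu^n)\le n\,\mathrm{TV}(\mu,\nu)$. For randomized $\cA$, we average over the internal randomness, which changes nothing.
\item The map $\mu\mapsto \loss_\mu(h)$ is $1$-Lipschitz in TV.
\item Composing both with the continuous $\varphi$, each of the two defining conditions is a strict inequality on a continuous function of $w$, hence open.
\end{itemize}

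\textbf{Covering.} Fix $w$ and let $\mu=\varphi(w)$, which is realizable. List replicability gives hypotheses $h_1,\dots,h_L$ with $\loss_\mu(h_i)\le\epsilon<\epsilon_0$ and $\Pr[\cA(S)\in\{h_1,\dots,h_L\}]\ge 1-\delta$. By pigeonhole, some $h_i$ has probability at least $(1-\delta)/L > (1-2\delta)/L$, so $w\in U_{h_i}$. (If $\delta=0$ is allowed, the inequality is still strict whenever $\delta>0$; I take $\delta>0$ as in the definition.)

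\textbf{Main obstacle: finiteness.} \Cref{thm:main_topological_result} assumes a finite cover, while $\cA$ may have infinitely many outputs. The sets $U_h$ are open and $\bbS^{d-1}$ is compact, so a finite subcover exists. A subcover still consists of sets contained in open hemispheres, and an intersection point of $d$ of them is an intersection point of $d$ distinct $U_h$'s, which is all the counting needs.

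\textbf{Conclusion.} For $d=1$ the claim $L\ge 1$ is trivial. For $d\ge 2$ we apply \cref{thm:main_topological_result} to $\bbS^{d-1}$, obtaining $(d-1)+1=d$ overlapping sets, and the counting above finishes. If this holds for every $(\epsilon,L)$-learner, then every admissible $L$ is at least $d$, so $\LR(\cC,\epsilon)\ge d$.
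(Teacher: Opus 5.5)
Your proposal is correct and follows the paper's proof step for step: each $U_h$ is open by continuity in total variation, the $U_h$ cover the sphere by pigeonhole from list replicability, \cref{thm:main_topological_result} is applied on $\bbS^{d-1}$, and counting the disjoint events gives $L > d(1-2\delta) \ge d-1$. Your extra details fill in points the paper leaves implicit: the bound $\mathrm{TV}(\mu^n,\nu^n)\le n\,\mathrm{TV}(\mu,\nu)$, the strict inequality $(1-\delta)/L > (1-2\delta)/L$ for $\delta>0$, and passing to a finite subcover by compactness, since $\cA$ may have infinitely many outputs.
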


Each set $U_h$ collects all points in $\bbS^{d-1}$ which, under $\varphi$, encode distributions for which $h$ is a ``likely'' and ``accurate'' output of $\cA$. In application the size of these sets will be controlled with $\eps$, whereas the other parameters $\delta$, $n$, and $\cA$ are less impactful.

\begin{proof}
    First, we will show that the family $\cU$ is open. Let $X$ be the set of realizable distributions $x$ for which $\loss_x(h) < \epsilon_0$. Since $\loss$ is a continuous function, $X$ is open with respect to TV distance. Similarly, $\varphi^{-1}(X)$ is in turn open because of the continuity of $\varphi$. Note that $\varphi^{-1}(X)$ is exactly those $w \in \bbS^{d-1}$ satisfying $\loss_{\varphi(w)}(h) < \epsilon_0$. An analogous argument shows that the set of $w \in \bbS^{d-1}$ satisfying $\Pr_{S \sim \varphi(w)^n} [\cA(S) = h] > \frac{1-2\delta}{L}$ is also open. Therefore, each $U_h$ is open because it is the intersection of two open sets.

    Second, we will show how the $(\epsilon,L)$-list replicability of $\cA$ guarantees that $\cU$ is a cover.
    For any element $w \in \bbS^{d-1}$, the distribution $\varphi(w)$ is realizable by the definition of $\varphi$.
    The list replicability assumption guarantees some set of hypotheses $\set{h_1, \dots, h_L}$ for which $\Pr_{S \sim\varphi(w)^n} [\cA(S) \in \set{h_1, \dots, h_L}] \geq 1-\delta$ and $\loss_{\varphi(w)}(h_i) \leq \epsilon$.
    It follows by the pigeonhole principle that there is some $h$ in $\set{h_1, \dots, h_L}$ such that $U_h$ contains $w$.
    
    Finally, we will show the desired lower bound on $\LR(\cC,\epsilon)$. By assumption, each $U_h$ is contained in an open hemisphere.
    Therefore, \cref{thm:main_topological_result} guarantees some $w \in \bbS^{d-1}$ is contained in $d$ sets of $\cU$.
    By construction, there are $d$ distinct hypotheses $h_1, \dots, h_d$ and a distribution $\varphi(w)$ such that
    \[
    \Pr_{S \sim\varphi(w)^n} [\cA(S) = h_i] > \frac{1-2\delta}{L}
    \;\,\text{for all}\;\,i \in [d].\]
    Since the events $[\cA(S) = h_i]$ are disjoint, we have that 
    \[
        1>\frac{d(1-2\delta)}{L}\geq \frac{d(1-1/d)}{L}=\frac{d-1}{L},
    \]
   since $\delta \leq 1/2d$. It follows that $L \geq d$. If this is true for arbitrary $\cA$, we conclude that $\LR(\cC,\epsilon) \geq d$ by the definition of the $\epsilon$-list replicability number.
\end{proof}

\section{Proof of \cref{thm:VC_bound}}\label{sec:VC_bound}

\theoremA*

\begin{proof}
    By definition of $\VCdim$ dimension, the concept class $\cC$ contains the class $\cP \coloneqq \set{\pm1}^{[d]}$ as a subclass. It is quick to check that, for every $\epsilon$,
    \[
        \LR(\cC, \epsilon) \geq \LR(\cP, \epsilon),
    \]
    so we restrict ourselves to the analysis of $\cP$.
    
    Arguing by way of \cref{lemma:framework}, first note that the set of realizable distributions $\Delta_{\cP}$ with the metric of total variation distance is homeomorphic to $\bbS^{d-1}$ (Here $\{e_i\}_i$ is the unit vector for the $i$th dimension in $\R^d$):
    \begin{align*}
        p : \Delta_\cP &\to \bbS^{d-1}\\
        \mu &\mapsto \frac{\sum_i e_i[\mu(i, +) - \mu(i, -)]}{\sqrt{\sum_i [\mu(i, +) - \mu(i, -)]^2}}
    \end{align*}
    Note that this map is a homeomorphism precisely because only one of $\mu(i, +), \mu(i, -)$ can be non-zero at a time.

    What is left is to check that, for any $\epsilon < \frac{1}{2}$, each set $U_h$ is contained in a hemisphere
    \begin{align*}
        \set{x \in \bbS^{d-1} : \langle x, u_h \rangle > 0}\\
        \text{where}~ u_h = (h(1), h(2), \dots, h(d))
    \end{align*}
    For a realizable distribution $\mu$ and a hypothesis $h$, the loss of $h$ on $\mu$ can be calculated as
    \begin{align*}
        \loss_\mu(h) = \frac{1}{2} \left(
            1 + \sum_i h(i) \left[
                \mu(i, -) - \mu(i, +)
            \right]
        \right).
    \end{align*}
    If $p(\mu) \in U_h$, then $\loss_\mu(h)$ is less than $\frac{1}{2}$, so
    \begin{align*}
        \sum_i h(i) \left[
                \mu(i, +) - \mu(i, -)
            \right] > 0.
    \end{align*}
    Finally, we get that
    \begin{align*}
        \langle u_h, p(\mu) \rangle 
        &= \frac{\sum_i h(i) \cdot [\mu(i, +) - \mu(i, -)]}{\sqrt{\sum_i [\mu(i, +) - \mu(i, -)]^2}} > 0. \qedhere
    \end{align*}
\end{proof}

\section{Proof of \cref{thm:margin_LR_lower_bound}}\label{sec:margin_LR_lower_bound}

\marginlowerbound*

Fix $d$. Pick any $\gamma\in(0,1/\sqrt{2})$. The lower bound follows from trying to learn the collection of realizable distributions $\Delta \coloneqq \set{\mu_w : w \in \bbS^{d-1}}$, where $\mu_w$ is the uniform distribution on
\[
    \Set{(x, c_w(x)) :  x\in \supp(c_w)}.
\]
The map $\varphi\colon \mathbb{S}^{d-1} \rightarrow \Delta$ defined by $w \mapsto \mu_w$ is a homeomorphism.

 Denote by $\nu$ the spherical measure, that is, the uniformly distributed measure on $\mathbb{S}^{d-1}$, normalized so that $\nu(\mathbb{S}^{d-1})=1$.
 For each $x\in \mathbb{S}^{d-1}$, let $(x_1,\dots,x_d)$ be its coordinates in $\mathbb{R}^{d}$.
 Define
 \[
    \epsilon \coloneq \nu(\set{
        x\in\mathbb{S}^{d-1} ~:~ x_1 \geq \gamma, ~x_2 \leq -\gamma
    })
    ~\text{and}~
    \epsilon_0 \coloneq \frac{
        \nu(\set{
            x \in \mathbb{S}^{d-1} ~\colon~
            x_1 \geq \gamma, ~x_2\leq -\gamma
        })
    }{
        \nu(\set{x\in\mathbb{S}^{d-1}\,\colon\,|x_1|\geq\gamma})
    }.
\]

It is not hard to see that both $\epsilon_0$ and $\epsilon$ are strictly positive and $\epsilon_0>\epsilon$ when $\gamma \in (0,1/\sqrt{2})$. 

Now suppose $\cA$ is an $(\epsilon,L)$-list replicable learner for $\cH_\gamma^d$. Let $T \subseteq \set{\pm1}^{\bbS^{d-1}}$ be the collection of all possible hypotheses output by $\cA$.
For each hypothesis $h \in T$, define the set $U_h$
\begin{align*}
    U_h &= \Set{
        w \in \bbS^{d-1} ~:~
        \Pr_{S \sim \mu_w^{n}} [
            \cA(S) = h
        ] > \frac{1-2\delta}{L},\,\loss_{\mu_w}(h)<\epsilon_0
    }
\end{align*}
of distributions on which $\cA$ is likely to output $h$.

\begin{claim}\label{claim:margin_halfsphere}
   For each $h\in T$, if $u,w \in U_h$, then $\langle u, w \rangle > 0$. In particular, $U_h$ is contained in an open hemisphere.
\end{claim}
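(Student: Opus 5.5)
The plan is a proof by contradiction. Suppose $u,w\in U_h$ with $\langle u,w\rangle\le 0$. We show that the two loss constraints $\loss_{\mu_u}(h)<\epsilon_0$ and $\loss_{\mu_w}(h)<\epsilon_0$ cannot both hold. The "in particular" part then follows at once: if $U_h\neq\emptyset$, fix any $u_0\in U_h$. Pairwise positivity gives $U_h\subseteq\set{x:\langle x,u_0\rangle>0}$, and this is an open hemisphere.

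\textbf{Step 1 (normalization).} Let $Z\coloneqq \nu(\set{x:|x_1|\ge\gamma})$. By rotation invariance of $\nu$, every $\supp(c_w)$ has $\nu$-measure $Z$. Hence $\mu_w$ has density $1/Z$ with respect to $\nu$ on $\supp(c_w)$, and
\[
\loss_{\mu_w}(h)=\tfrac{1}{Z}\,\nu(\set{x\in\supp(c_w): h(x)\neq c_w(x)}).
\]
Note that $\epsilon_0=\epsilon/Z$ by the definitions.

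\textbf{Step 2 (disagreement regions).} Define
\[
D_1=\set{x:\langle u,x\rangle\ge\gamma,\ \langle w,x\rangle\le-\gamma}, \qquad D_2=-D_1 .
\]
On $D_1$ we have $c_u=+1$ and $c_w=-1$, and on $D_2$ the signs are reversed. Both regions lie in $\supp(c_u)\cap\supp(c_w)$. So at every point of $D_1\cup D_2$, $h$ disagrees with at least one of $c_u,c_w$. Since $D_1,D_2$ are disjoint (as $\gamma>0$), we get
\[
\loss_{\mu_u}(h)+\loss_{\mu_w}(h)\ \ge\ \frac{\nu(D_1)+\nu(D_2)}{Z}=\frac{2\nu(D_1)}{Z}.
\]

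\textbf{Step 3 (monotonicity in the angle).} This is the main obstacle. We must show $\nu(D_1)\ge\epsilon$ whenever $\langle u,w\rangle\le0$. Write $D_1=\set{x:\langle u,x\rangle\ge\gamma,\ \langle -w,x\rangle\ge\gamma}$, an intersection of two caps with centers $u$ and $-w$. The angle $\alpha$ between these centers is at most $\pi/2$. When $\alpha=\pi/2$, a rotation shows $\nu(D_1)=\epsilon$ exactly, by the definition of $\epsilon$.

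To prove monotonicity, project onto the plane $\lspan\set{u,w}$. The pushforward of $\nu$ to this plane is rotation invariant, so in polar coordinates $(r,\theta)$ it factors as a radial law times the uniform angle. For each fixed $r$, the admissible angles form the intersection of two arcs of equal length, one centered at the direction of $u$ and one at the direction of $-w$. The length of this intersection is nonincreasing in $\alpha$. Integrating over $r$ shows $\nu(D_1)$ is nonincreasing in $\alpha$, so $\alpha\le\pi/2$ gives $\nu(D_1)\ge\epsilon$. (For $d=2$ the projection is the identity and the same argument applies.)

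\textbf{Step 4 (conclusion).} Combining the steps gives $\loss_{\mu_u}(h)+\loss_{\mu_w}(h)\ge 2\epsilon/Z=2\epsilon_0$. This contradicts both losses being $<\epsilon_0$, so $\langle u,w\rangle>0$.
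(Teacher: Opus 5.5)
Your argument is correct, and its skeleton is the paper's. The paper's disagreement set $\cE_{u,w}$ is your $D_1\cup D_2$. It bounds $\nu(\cE_{u,w})\le\nu(\supp c_w)\bigl(\loss_{\mu_u}(h)+\loss_{\mu_w}(h)\bigr)<2\epsilon$ exactly as in your Steps 1--2, and it reaches the contradiction via $\nu(D_1)\ge\epsilon$ when $\langle u,w\rangle\le 0$.

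You differ in Step 3. The paper rotates to $u=e_1$, $w=\cos\theta\,e_1+\sin\theta\,e_2$. For $\theta\in[\pi/2,\pi]$ it checks the pointwise inclusion $\set{x_1\ge\gamma,\ x_2\le-\gamma}\subseteq\set{x_1\ge\gamma,\ \cos\theta\,x_1+\sin\theta\,x_2\le-\gamma}$, which reduces to the one-line inequality $\sin\theta-\cos\theta\ge 1$. Monotonicity of $\nu$ then gives $\nu(D_1)\ge\epsilon$ with no integration at all.

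Your projection and polar-disintegration argument proves more: $\nu(D_1)$ is nonincreasing in the angle $\alpha$ between $u$ and $-w$ over the whole range. That extra information is never used, and it costs you the disintegration of the rotation-invariant pushforward plus an arc-intersection length claim that you assert without proof (both are true).

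Your own picture already gives the paper's inclusion. Hold the $u$-arc fixed at $[-\beta,\beta]$ with $\beta=\arccos(\gamma/r)<\pi/2$, and place $-w$ at angle $-\alpha$. The intersection is then $[-\beta,\beta-\alpha]$ (empty if $\alpha>2\beta$), which grows as a set as $\alpha$ decreases. So the regions themselves are nested, and the integration over $r$ can be dropped.

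A minor point: when $w=-u$ the span of $u$ and $w$ is only a line, so you should project onto any plane containing $u$ and $w$. The paper's parametrization covers this case automatically as $\theta=\pi$.
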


\begin{proof}[Proof of Claim.]
    Let $u, w \in U_h$.
    Define the set on which $c_u$ and $c_{w}$ disagree:
    \[
        \cE_{u, w} \coloneqq
        \set{x \in \supp c_u \cap \supp c_{w} : c_u(x) \neq c_{w}(x)}.
    \]
We have
\begin{equation} 
\label{eq:error_support} 
\begin{split}
    \nu(\cE_{u, w}) &=\nu(\supp(c_w))\frac{\nu(\cE_{u,w})}{\nu(\supp(c_w))}\\
    & \leq \nu(\supp(c_w))(\loss_{\mu_u}(h) + \loss_{\mu_{w}}(h)) \\
    &< \nu(\supp(c_w)) 2\epsilon_0 \\
    &= 2\epsilon.
\end{split}
\end{equation}
The first inequality is implied by the fact that $h$ is not consistent with either $\mu_u$ or $\mu_{w}$ on each point in $\cE_{u,w}$. The second inequality follows directly from our definition of $U_h$. The last equality is a consequence of $\nu(\supp(c_w))=\nu(\set{x\in \mathbb{S}^{d-1}:|x_1|\geq \gamma})$ for any $w$ and our definitions of $\epsilon$ and $\epsilon_0$.
    
\begin{figure}[H]
    \begin{center}
        \begin{tikzpicture}[baseline=(current bounding box.center),scale=1]
            \draw (0, 0) circle (2);

            \coordinate (A) at (90:2);
            \coordinate (B) at (150:2);
            \filldraw[thick] (0:0) circle (2pt);

            \filldraw[thick, blue] (A) circle (2pt);
            \node[anchor=south, blue] at (90:2.1) {$w$};
            \node[anchor=north, blue] at (-50:2.3) {$\operatorname{supp}(c_w)$};
            \filldraw[thick, red] (B) circle (2pt);
            \node[anchor=south east, red] at (B) {$u$};
            \node[anchor=south, red] at (-70:1.3) {$\operatorname{supp}(c_u)$};

            \draw[dashed]
                (90:1.5) -- (0:0) -- (150:1.5);
            \draw
                (90:0.5) arc[start angle=90, end angle=150, radius=0.5] -- (150:0.3);
            \node[anchor=-60] at (120:0.5) {$\theta$};

            \draw[dashed]
                (170:1.5) -- (0:0)
                (190:1.5) -- (0:0);
            \draw
                (170:0.5) arc[start angle=170, end angle=190, radius=0.5] -- (190:0.5);
            \node[anchor=east] at (180:0.5) {$\alpha$};

            \draw[dashed]
                (10:1.5) -- (0:0)
                (50:1.5) -- (0:0);
            \draw
                (10:0.5) arc[start angle=10, end angle=50, radius=0.5] node[anchor=180] {$\theta - \alpha$} -- (50:0.5);

            \draw[very thick, blue]
                (10:1.8) arc[start angle=10, delta angle=160, radius=1.8] -- (170:1.8)
                (190:1.8) arc[start angle=190, delta angle=160, radius=1.8] -- (350:1.8)
                (10:1.7) -- (10:1.9)
                (170:1.7) -- (170:1.9)
                (190:1.7) -- (190:1.9)
                (350:1.7) -- (350:1.9)
                ;

            \draw[very thick, red]
                (70:1.6) arc[start angle=70, delta angle=160, radius=1.6] -- (230:1.6)
                (250:1.6) arc[start angle=250, delta angle=160, radius=1.6] -- (410:1.6)
                (70:1.5) -- (70:1.7)
                (230:1.5) -- (230:1.7)
                (250:1.5) -- (250:1.7)
                (410:1.5) -- (410:1.7)
                ;

            \draw[very thick]
                (10:2) arc[start angle=10, end angle=50, radius=2] -- (50:2)
                (190:2) arc[start angle=190, end angle=230, radius=2] -- (230:2)
                (10:2) -- (10:2.2)
                (50:2) -- (50:2.2)
                (190:2) -- (190:2.2)
                (230:2) -- (230:2.2);

            \node at (25:3) {$\cE_{w, w'}$};
        \end{tikzpicture}
    \end{center}
    \vspace*{-5mm}        
    \caption{An illustration of the disagreement set $\cE_{u, w}$ between $c_u$ and $c_w$, projected onto the plane common to $u, w$, and $0$.}
    \label{fig:angle}
\end{figure}
    
Now let $\theta \coloneqq \arccos(\langle u, w \rangle)$ be the angle between $u$ and $w$. We have
\begin{equation} 
\label{eq:error_support_equality} 
\begin{split}
\nu(\cE_{u, w}) &=\nu(\set{x \in \supp c_u \cap \supp c_{w} : c_u(x) \neq c_{w}(x)}) = \\ & = 2\nu(\set{x\in \mathbb{S}^{d-1}\,\colon \langle u,x\rangle\geq\gamma,\langle w,x\rangle\leq-\gamma})= \\ & = 2\nu(\set{x\in \mathbb{S}^{d-1}\,\colon x_1\geq\gamma,\cos(\theta)x_1+\sin(\theta)x_2\leq-\gamma})
\end{split}
\end{equation}
where the last equality follows from the rotational invariance of the spherical measure.

For the sake of contradiction, assume that $\theta \in [\pi/2,\pi]$. Next we show
\begin{equation}\label{eq:inclusion_angle}
 \nu(\set{x\in\mathbb{S}^{d-1}\,\colon x_1\geq\gamma,\,x_2\leq-\gamma})
 \subseteq
 \nu(\set{x\in\mathbb{S}^{d-1}\;\colon x_1\geq\gamma,\cos(\theta)x_1+\sin(\theta)x_2\leq-\gamma}).   
\end{equation}
Indeed, it is enough to check $\cos(\theta)x_1+\sin(\theta)x_2\leq-\gamma$ assuming $x_1\geq\gamma$ and $x_2\leq-\gamma$. Since $\theta\in[\pi/2,\pi]$, cosine is non-positive and sine is non-negative. Thus, we only need to verify $\sin(\theta)-\cos(\theta)\geq1$. Since, the left hand-side is non-negative, we square both sides to get 
\begin{equation*}
\sin^2(\theta)-2\sin(\theta)\cos(\theta)+\cos^2(\theta)\geq 1
\end{equation*}
which holds as $\theta\in[\pi/2,\pi]$. We conclude that the inclusion in \eqref{eq:inclusion_angle} holds. 

Now by \eqref{eq:error_support_equality} and \eqref{eq:inclusion_angle}, we have 
\begin{equation}\label{eq:lower_measure}
    \nu(\cE_{u,w})\geq
    2\nu(\set{x\in\mathbb{S}^{d-1}\,\colon x_1\geq\gamma,\,x_2\leq-\gamma})
    =2\epsilon.
\end{equation}
Finally, by \eqref{eq:error_support} and \eqref{eq:lower_measure}, we have $2\epsilon\leq\nu(\cE_{u,w})<2\epsilon$ which is a contradiction. Thus, if $u, w \in U_h$, then $\theta < \frac{\pi}{2}$, and so $\langle u, w \rangle > 0$.
\end{proof}
Having proved \cref{claim:margin_halfsphere}, we have that \cref{lemma:framework} implies \cref{thm:margin_LR_lower_bound}.

\begin{remark}
    Examining the relationship between $\epsilon$ and $\gamma$ in the above proof, we can see that, for any dimension $d>1$, there exists a continuous decreasing function $\epsilon\colon(0,1/\sqrt{2}) \rightarrow (0,1/4)$ such that: $\epsilon(\gamma)\rightarrow1/4$ as $\gamma \rightarrow 0^+$,  $\epsilon(\gamma)\rightarrow0$ as $\gamma \rightarrow {(1/\sqrt{2})}^-$, and $
    \LR(\cH_\gamma^d, \epsilon(\gamma))
    \geq d$ for any $\gamma \in (0,1/\sqrt{2})$. This justifies the depiction of the boundary between the lower green and middle yellow regions in \cref{fig:large_margin_regimes}.
\end{remark}

\section{Proof of \cref{thm:very_large_margin}}

\marginupperbound*

Before we begin, we introduce some basic notation. We endow $\mathbb{S}^{d-1}$ with the $\ell_2$ metric and for each $x\in\mathbb{S}^{d-1}$ and all $\eta>0$ we denote by $B(\eta,x)$ all points in $\mathbb{S}^{d-1}$ which are less than $\eta$ away from $x$. 

For each realizable distribution $\mu$, let $\supp(\mu)$ denote its support. We define
\[
    \supp^{+}(\mu)\coloneqq \set{x \in \mathbb{S}^{d-1}\colon(x,1)\in\supp(\mu)}
\]
to be the positive label support of $\mu$, and likewise for $\supp^{-}(\mu)$.

Next, we present some topological results that will be used in the proof.

\begin{theorem}[{\cite[Theorem A]{localborsukulam}}]\label{thm:cover_small_overlap}
    There exists a finite antipodal-free open cover $\cU\coloneqq \{U_i\}_{i=1}^{n}$ of $\mathbb{S}^{d-1}$ such that each point $x\in\mathbb{S}^{d-1}$ is contained in at most $\left \lceil \frac{d}{2}\right \rceil+1$ of the sets $U_i$.
\end{theorem}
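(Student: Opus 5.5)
The plan is an explicit construction, done first for odd-dimensional spheres and then restricted to even-dimensional ones.

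\emph{Reduction to odd spheres.} Write $d=2k$ or $d=2k-1$; in both cases $\lceil d/2\rceil+1=k+1$. If $d=2k-1$, regard $\bbS^{d-1}=\bbS^{2k-2}$ as an equator of $\bbS^{2k-1}$. Restricting a cover of $\bbS^{2k-1}$ to it keeps every set open and antipodal-free, and does not increase the overlap. So it suffices to build, for $\bbS^{2k-1}\subset\mathbb{C}^k$, a finite antipodal-free open cover of overlap degree at most $k+1$.

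\emph{Construction on $\bbS^{2k-1}$.} Write a point as $z=(z_1,\dots,z_k)$ and record two pieces of data: the moduli vector $t(z)=(|z_1|^2,\dots,|z_k|^2)\in\Delta^{k-1}$ and the arguments $\arg z_j$ wherever $z_j\neq 0$. The antipodal map fixes $t$ and rotates every argument by $\pi$. So any set of the form
\[
\{z : t(z)\in W,\ \arg z_j\in I\}, \qquad W\subseteq\{t_j>0\},\ I \text{ an open arc of length }<\pi,
\]
is open and antipodal-free. I would cover $\Delta^{k-1}$ by open sets $W_1,\dots,W_k$ with $W_j\subseteq\{t_j>0\}$, for instance $W_j=\{t_j>\tfrac{1}{2k}\max_i t_i\}$. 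Over each $W_j$ I would cover the $j$-th circle by a few arcs of length $<\pi$, and take the products above as the cover $\cU$. Being a cover is immediate: every $t$ has a coordinate exceeding $\tfrac{1}{2k}\max_i t_i$, namely the maximal one.

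\emph{Controlling overlap (main obstacle).} The naive count is too weak. A point lying in $m$ of the $W_j$'s and in two arcs for each such $j$ lies in $2m$ sets, which can be as large as $2k$. Even a generic-position dimension count only yields $m+r\le 2k$, where $r$ is the number of coordinates sitting on an arc overlap. The halving must therefore come from coupling the simplex cover with the arc covers. My first attempt would be to let the arc overlaps in the $j$-th circle occur only where $t$ is deep inside $W_j$ and outside every other $W_i$. Concretely, I would pick thresholds so that a coordinate contributes two arcs only when it is the unique dominant coordinate, while a non-dominant coordinate contributes at most one arc. This makes the arcs for non-dominant $j$ depend on $t$, as a $t$-dependent rotation or shrinking of the arcs, so that they partition the circle except near a set avoided on $W_i\cap W_j$.

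If that gives $\le 1+k$ sets at each point, the construction is complete. Otherwise I would fall back on an inductive join argument: use $\bbS^{2k-1}=\bbS^{2k-3}*\bbS^1$, combine a cover of $\bbS^{2k-3}$ of overlap $k$ with a three-arc cover of $\bbS^1$, and glue them along the join parameter with a partition so that each new dimension pair costs only one extra unit of overlap. Verifying this gluing bound is the step I expect to require the most care.
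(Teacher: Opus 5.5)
The paper does not prove this statement; it imports it from \cite[Theorem A]{localborsukulam}. Your reduction to $\bbS^{2k-1}$ is fine. However, the construction on $\bbS^{2k-1}$, which is the whole content, is missing, and the route you sketch provably cannot reach overlap $k+1$ once $k\ge 2$.

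With fixed arcs, a set $\{z : t(z)\in W,\ \arg z_j\in I\}$, where $I$ is centred at $\alpha$ and has length $<\pi$, lies in the open hemisphere $\{z:\mathrm{Re}(e^{-i\alpha}z_j)>0\}$. So \cref{thm:main_topological_result} of this very paper forces $2k$ of your sets to meet, whatever the $W_j$ and thresholds.

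Letting the arcs depend on $t$ does not help. Let $C_j\subseteq\{t_j>0\}$ be the set of $t\in\Delta^{k-1}$ over which the sets attached to coordinate $j$ cover the whole $j$-th circle. Then:
\begin{itemize}
\item each $C_j$ is open, by compactness of the circle;
\item the $C_j$ cover $\Delta^{k-1}$: if $t\notin\bigcup_j C_j$, pick for each $j$ an angle missed over $t$, and the point with these angles is uncovered;
\item $C_j$ misses the facet $\{t_j=0\}$.
\end{itemize}
The KKM lemma therefore gives $t^*\in\bigcap_j C_j$. Over $t^*$, every coordinate's family covers its circle by proper open sets and so has a doubly covered angle. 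The point with these $k$ angles lies in $2k$ sets; for $k=2$ this is just connectedness of $[0,1]$. In short, being covered by one coordinate's family is an open condition, so the regions where different coordinates are ``dominant'' must overlap. That overlap is exactly what your threshold scheme tries to avoid.

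Your fallback, as described, has the same defect. Suppose each glued set of $\bbS^{2k-3}*\bbS^1$ depends on the join parameter $\lambda$ and on only one factor. Consider the set of $\lambda$ at which the $\bbS^{2k-3}$-side covers its factor, and the set at which the $\bbS^1$-side covers its factor. These are open, contain $0$ and $1$ respectively, and together cover $[0,1]$, so they intersect. At such a $\lambda$, the local Borsuk--Ulam bound quoted in the introduction gives a point of $\bbS^{2k-3}$ in $k$ sets, and the circle contributes two more, for $k+2$ in total.

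So a correct construction must use sets that depend jointly on several angles (equivalently, on both join factors), not all contained in open hemispheres. They must be arranged so that the loci of maximal overlap in different coordinates never coincide. Supplying such a coupling is precisely the content of the cited theorem. You identify this step as the main obstacle but give no construction for it, so as it stands the proposal is not a proof.
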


We also utilize the following standard result in the theory of compact metric spaces.

\begin{theorem}[Lebesgue Number Lemma]\label{thm:lebesg_number}
Let $(X,d)$ be a compact metric space, and let $\mathcal{U}$ be an open cover of $X$. 
Then there exists a number $\eta > 0$ such that for every $x \in X$, 
the open ball $B(x,\eta)$ is contained in some $U \in \mathcal{U}$.
\end{theorem}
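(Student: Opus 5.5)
We prove the statement directly from compactness by turning "how deep inside the cover a point sits" into a continuous positive function and taking its minimum. First, by compactness of $(X,d)$ we pass to a finite subcover $U_1,\dots,U_k$ of $\mathcal{U}$. If some $U_i$ equals $X$, any $\eta>0$ works, so we may assume each complement $X\setminus U_i$ is nonempty. Then the distance functions $x\mapsto d(x,X\setminus U_i)$ are well defined and $1$-Lipschitz, hence continuous.

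Next we define
\[
  f(x)\coloneqq \frac{1}{k}\sum_{i=1}^k d(x, X\setminus U_i),
\]
which is continuous as an average of continuous functions. We check that $f$ is strictly positive. Given $x$, it lies in some $U_j$. Since $U_j$ is open, some ball around $x$ is contained in $U_j$, so $d(x,X\setminus U_j)>0$. All other terms are nonnegative, so $f(x)>0$. By compactness, $f$ attains its minimum, and we set $\eta\coloneqq\min_X f>0$.

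Finally, fix $x\in X$. An average is at most the largest of the averaged terms, so there is an index $i$ with $d(x,X\setminus U_i)\ge f(x)\ge\eta$. Then every $y$ with $d(x,y)<\eta$ lies outside $X\setminus U_i$, i.e.\ in $U_i$. Hence $B(x,\eta)\subseteq U_i\in\mathcal{U}$, as required.

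The only point needing care is the positivity of $f$, which uses openness of the cover sets; the remaining steps are routine. One could instead argue by contradiction, using a sequence $x_n$ with $B(x_n,1/n)$ contained in no $U$ and a convergent subsequence, but the averaging function gives $\eta$ directly.
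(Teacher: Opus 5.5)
Your proof is correct and follows essentially the same route as the paper: pass to a finite subcover, build a continuous, strictly positive ``depth'' function from the distances $d(x, X\setminus U_i)$, and take $\eta$ to be its minimum. The paper uses $\max_i d(x,U_i^c)$ instead of the average, which makes your final ``average is at most the largest term'' step unnecessary. Your separate treatment of the case $U_i = X$, where the complement is empty, handles an edge case the paper leaves implicit.
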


\begin{proof}
    Since $X$ is compact, we can assume without loss of generality that the cover $\cU$ is finite. Define the function $f(x)\coloneq \max_{i\in I}d(x,U_i^c)$. By the continuity of $f$ and compactness of $X$, it follows that $f$ must achieve its minimum. The minimum cannot be zero as $\cU$ is a cover. Taking $\eta$ to be the minimum of $f$ completes the proof.
\end{proof}

Finally, we present the proof \cref{thm:very_large_margin}.  

\begin{proof}[Proof of \cref{thm:very_large_margin}]
Pick any $\epsilon\in[0,1/2)$. 

Let $\cU$ be the antipodal-free open cover $\cU$ of $\mathbb{S}^{d-1}$ from \cref{thm:cover_small_overlap}. By \cref{thm:lebesg_number}, there exists a $\eta>0$ such that each ball $B(\eta,x)$ is contained in some $U_i$. 

Pick $\gamma(d)>0$ to be sufficiently close to $1$ so that for any realizable distribution $\mu$ there exist a point $w_{\mu}\in\mathbb{S}^{d-1}$ for which the following containments hold:
\[\supp^+(\mu)\subseteq B(\eta/2,w_{\mu})~~\text{and}~~ \supp^-(\mu)\subseteq B(\eta/2,-w_{\mu}).\]

Next we construct a $(0,\left \lceil \frac{d}{2}\right \rceil+1)$-list replicable algorithm with sample complexity $1$.
\begin{enumerate}
    \item Sample one point $(x,y)$ from $\mu$.
     Let \[u(x,y)\coloneq \begin{cases} x & \text{for}~ y=1\\
     -x & \text{for} ~y=-1\end{cases}\]
    \item Find the smallest index $i$ for which $B(\eta,u)\subseteq U_i$ for some $U_i \in \cU$. Such an $i$ is guaranteed to exist by the Lebesgue Number Lemma (\cref{thm:lebesg_number}).
    \item Output the signed indicator hypothesis
    \[h_{U_i}(x)\coloneq \begin{cases} 1 & \text{for} ~x\in U_i\\
     -1 & \text{for} ~ x\in U_i^c\end{cases}\]
\end{enumerate}
Firstly, we show accuracy. With probability $1$ we have the chains of containments 
\begin{equation*}\label{eq: containments_1}
    \supp^+(\mu)\subseteq B(\eta/2,w_{\mu})\subseteq B(\eta,u)\subseteq U_i
\end{equation*}
\begin{equation*}
    \supp^-(\mu)\subseteq B(\eta/2,-w_{\mu})\subseteq B(\eta,-u)\subseteq U_i^c
\end{equation*}
where the second containment in both chains follows from the fact that $u\in B(\eta/2,w_{\mu})$, and the third containment in the second chain follows from the property that $\cU$ is antipodal free. We conclude  $h_{U_i}(x)$ achieves zero error with probability 1. 

Finally, we check the list replicability. With probability 1 we have $B(\eta/2,w_{\mu})\subseteq U_i$ and, in particular, $w_{\mu}\in U_i$. By \cref{thm:cover_small_overlap}, there could be at most $\left \lceil \frac{d}{2}\right \rceil+1$ sets $U_j\in \cU$ containing $w_{\mu}$. Hence, with probability $1$, we output from a list of $\left \lceil \frac{d}{2}\right \rceil+1$ different hypotheses.
\end{proof}

\section{Proof of \cref{thm:linear_classifiers}}\label{sec:linear_classifiers}

\linearclassifiers*

\begin{proof}
Consider the set of realizable distributions $\Delta \coloneqq \set{\mu_w : w \in \bbS^{d-1}}$, where $\mu_w$ is the uniform distribution on
\[
    \Set{(x, c_w(x)) : x\in \supp(c_w)}.
\]
The map $\varphi\colon \mathbb{S}^{d-1} \rightarrow \Delta$ defined by $w \mapsto \mu_w$ is a homeomorphism.

By assumption $\cA$ outputs linear classifiers of the form
\begin{align*}
     h_w(x) \coloneq \begin{cases}
        1 &\text{if } \inp{w}{x} \geq 0,\\
        -1 &\text{if } \inp{w}{x} < 0
    \end{cases}
\end{align*}
for some $w \in \bbS^{d-1}$. Fix $\epsilon < \epsilon_0 < 1/2$. For any such output $h_w$, we will show that the set
\[
    U_{h_w} \coloneqq \Set{
    y \in \bbS^{d-1} ~:~
    \Pr_{S \sim \varphi(y)^n}[\cA(S) = h_w] > \frac{1-2\delta}{d-1}
    ~\text{and}~
    \loss_{\varphi(y)}(h_w) < \epsilon_0
    }
\]
is contained in the open hemisphere
\[
    H_w \coloneqq \Set{y \in \bbS^{d-1} : \inp{w}{y} > 0}
.\]
Indeed, if $\inp{w}{y} \leq 0$, then $h_w$ correctly labels no more than half of the support of $\varphi(y)$. Since $\varphi(y)$ is a uniform distribution, it follows that $\loss_{\varphi(y)}(h_w) \geq 1/2 > \epsilon_0$. We deduce that $U_{h_w} \subseteq H_w$, so applying \cref{lemma:framework} completes the proof.
\end{proof}

\bibliographystyle{alphaurl}
\bibliography{refs}

\end{document}